\newmdenv[linecolor=black,backgroundcolor=gray!10,linewidth=1pt]{mybox}
\newcommand{\wh}{\widehat}
\newcommand{\N}{\mathcal{N}}
\newcommand{\R}{\mathbb{R}}
\renewcommand{\hat}{\wh}
\newcommand{\rset}[1]{\{ #1 \}}
\newcommand{\uplowquant}[3]{\big(#1\big)_{#2}^{#3}}
\newcommand{\POk}{\textsc{PO}_k}
\newcommand{\etastar}{\eta^*}
\newcommand{\etabar}{\Bar{\eta}}
\newcommand{\etabarup}{\Bar{\eta}^{\text{upd}}}
\newcommand{\acks}[1]{\section*{Acknowledgments}#1}
\newcommand{\eqtriangle}[1]{\stackrel{{def}}{=}\text{#1}}
\newcommand{\ITE}[1]{\textsc{ITE}_{#1}}
\newtheorem{theorem}{Theorem}[section]
\newtheorem{lemma}[theorem]{Lemma}
\newtheorem{definition}[theorem]{Definition}
\newtheorem{proposition}[theorem]{Proposition}
\newtheorem{assumption}[theorem]{Assumption}
\newtheorem{remark}[theorem]{Remark}
\newcommand{\medianestimator}{\textsc{Median-Estimator}\xspace}
\newcommand{\variability}{\textsc{Compute-Variability}\xspace}
\newcommand{\variabilitylowerquantile}{\textsc{Compute-Variability-Lower-Quantile}\xspace}
\newcommand{\variabilityupperquantile}{\textsc{Compute-Variability-Upper-Quantile}\xspace}
\DeclareMathOperator*{\argmin}{{\text{argmin}}}
\DeclareMathOperator*{\E}{{\mathbb{E}}}
\DeclareMathOperator{\supp}{supp}
\DeclareMathOperator{\lab}{lab}
\DeclarePairedDelimiter\abs{\lvert}{\rvert}%
\author{Raghavendra Addanki\\ Adobe Research \\ \texttt{raddanki@adobe.com} \and Siddharth Bhandari \\ Toyota Technological Institute at Chicago \\ \texttt{siddharth@ttic.edu}}
\newcommand{\define}[4][ignore]{%
  \ifstrequal{#1}{ignore}{}{
  \@namedef{thmtitle@#2}{#1}}%
  \@namedef{thm@#2}{#4}%
  \@namedef{thmtypen@#2}{lemma}%
  \newtheorem{thmtype@#2}[theorem]{#3}%
  \newtheorem*{thmtypealt@#2}{#3~\ref{#2}}%
}
\newcommand{\state}[1]{%
  \@namedef{curthm}{#1}
  \@ifundefined{thmtitle@#1}{
  \begin{thmtype@#1}
    }{
  \begin{thmtype@#1}[\@nameuse{thmtitle@#1}]
  }
    \label{#1}
    \@nameuse{thm@#1}
  \end{thmtype@#1}
  \@ifundefined{thmdone@#1}{
  \@namedef{thmdone@#1}{stated}%
  }{}
}
\newcommand{\restate}[1]{%
  \@namedef{curthm}{#1}
  \@ifundefined{thmtitle@#1}{
    \begin{thmtypealt@#1}
    }{
  \begin{thmtypealt@#1}[\@nameuse{thmtitle@#1}]
  }
    \@nameuse{thm@#1}
  \end{thmtypealt@#1}
  \@ifundefined{thmdone@#1}{
  \@namedef{thmdone@#1}{stated}%
  }{}
}
\newcommand{\thmlabel}[1]{
  \@ifundefined{thmdone@\@nameuse{curthm}}{\label{#1}
    }{\tag*{\eqref{#1}}}
}
\date{}
\title{Limits of Approximating the Median Treatment Effect\footnote{The author ordering is alphabetical, as is customary in the theoretical computer science community.}}
\begin{document}

\maketitle

\begin{abstract}
    Average Treatment Effect (ATE) estimation is a well-studied problem in causal inference. However, it does not necessarily capture the heterogeneity in the data, and several approaches have been proposed to tackle the issue, including estimating the Quantile Treatment Effects. In the finite population setting containing $n$ individuals, with treatment and control values denoted by the potential outcome vectors $\mathbf{a}, \mathbf{b}$, much of the prior work focused on estimating median$(\mathbf{a}) -$ median$(\mathbf{b})$, where median($\mathbf x$) denotes the median value in the sorted ordering of all the values in vector $\mathbf x$. It is known that estimating the difference of medians is easier than the desired estimand of median$(\mathbf{a-b})$, called the Median Treatment Effect (MTE). The fundamental problem of causal inference -- for every individual $i$, we can only observe one of the potential outcome values, i.e., either the value $a_i$ or $b_i$, but not both, makes estimating MTE particularly challenging. In this work, we argue that MTE is not estimable and detail a novel notion of approximation that relies on the sorted order of the values in $\mathbf{a-b}$. Next, we identify a quantity called \emph{variability} that exactly captures the complexity of MTE estimation. By drawing connections to the notions of instance-optimality studied in theoretical computer science, we show that \emph{every} algorithm for estimating the MTE obtains an approximation error that is no better than the error of an algorithm that computes variability. Finally, we provide a simple linear time algorithm for computing the variability exactly. Unlike much prior work, a particular highlight of our work is that we make no assumptions about how the potential outcome vectors are generated or how they are correlated, except that the potential outcome values are $k$-ary, i.e., take one of $k$ discrete values. 
\end{abstract}

%%%%%%%%%%%%% Introduction %%%%%%%%%%%%

\section{Introduction}

In the realm of experimentation, much attention is dedicated to constructing efficient estimators for the average treatment effect (ATE), where machine learning-based double robust estimation techniques are considered state-of-the-art~\citep{chernozhukov2018double, kennedy2017non}. However, making policy or downstream task decisions only based on ATE, might be sub-optimal, as there is a loss of valuable information from the heterogenous regions in the data. There have been many methods proposed in the literature for tackling this issue~\citep{imai2013estimating, athey2016recursive, ding2019decomposing}, and we focus on investigating variations across different outcome distribution quantiles.
Unlike ATE, which \textit{averages} out the responses, Quantile Treatment Effects (QTE)~\citep{chernozhukov2005iv} provide a more nuanced perspective by unveiling whether effects differ among individuals or outcomes, allowing for a more detailed understanding. This approach proves invaluable in discerning if a treatment is more impactful at specific points in the distribution, facilitating the study of policies affecting diverse population segments, and groups unfairly affected. Several cautionary tales exist in the literature~\citep{heckman1997making}, with recent examples related to Netflix's A/B testing oversight of treatment effects disparities among user engagement levels~\citep{bojinov2020avoid}.

We study the finite populating setting on $n$ individuals, where an experimenter assigns the individuals to treatment and control groups. Suppose that the treatment and control values of all the individuals be denoted by $\mathbf{a}$ and $\mathbf{b}$ respectively. Much of the work on QTE estimation has focused on estimating the difference of quantiles of treatment and control~\citep{howard2022sequential, martinez2023counterfactual}. E.g., for median estimation, the estimand studied is $\textit{median}(\mathbf{a}) - \textit{median}(\mathbf{b})$. Although efficient estimators are known, such an estimand has a limitation, as it is not based on the causal effect vector $\mathbf{a} - \mathbf{b}$, which of course is not observable. It might not necessarily capture the causal effect entirely. In contrast, some prior work considered the estimand $\textit{median}(\mathbf{a} - \mathbf{b})$, broadly speaking~\citep{heckman1997making, leqi2021median, kallus2022s, kennedy2023towards}. Recently,~\citet{kallus2022s} studied a related problem of identifying the fraction of individuals that are negatively affected (under binary potential outcomes). They provide efficient estimators, but assume the availability of covariates and make distributional assumptions. 

In this work, we study the estimation of $\textit{median}(\mathbf{a} - \mathbf{b})$, henceforth called the Median Treatment Effect (MTE) estimation, without any assumptions, except that the potential outcomes, i.e., the vectors $\mathbf{a}$ and $\mathbf{b}$ are binary (or $k$-ary). To the best of our knowledge, ours is the first work that tackles this problem in a general setting with minimal or no assumptions. We want to highlight that we \textit{do not} make any distributional assumptions, and the vectors $\mathbf{a}$ and $\mathbf{b}$ can be arbitrarily correlated. Our results contribute to the growing body of work that identifies connections between theoretical computer science and causal inference~\citep{pouget2019variance, addanki2022sample, harshaw2023balancing}. In our work, we answer the following questions.

\begin{mybox}
\begin{enumerate}
    \item  \textbf{Is Median Estimable?} We show that MTE estimation is infeasible, in stark contrast to ATE estimation. 
    In particular, we show that \emph{any} estimator (algorithm) is unable to distinguish between two distributions from which the treatment and control values are drawn if they have the same marginal distributions. 
    Further, the two distributions can have a large difference in their median values. 
    \item \textbf{Can we approximate MTE?} To overcome the difficulty, we define a notion of approximation that does not rely on the estimate being additively close to the median but instead approximates the median by being close to in terms of quantiles, i.e., the rank in the sorted order of the values in $\mathbf{a-b}$.  Further, we show that we can approximately estimate MTE under this notion.
    \item \textbf{Limits of Approximating MTE.} We identify a quantity called \emph{variability}, that essentially captures the complexity of MTE estimation. We show that any algorithm that computes variability obtains the minimum approximation error, and also provides an efficient greedy algorithm for computing the variability.
\end{enumerate}
\end{mybox}

\subsection{Technical Overview}

Throughout the overview, we focus on binary outcomes (i.e., $k=2$), where $\mathbf{a}, \mathbf{b}$ are binary. All our results extend to the situation where we have $k$-ary outcomes. When $k=2$, the MTE which is the median of the difference vector $\mathbf{a - b}$ can be only one of the following values: $\{-1, 0, 1\}$. An experimenter provides us with an assignment while ensuring that every individual $i$ is assigned to either the treatment or control group but not both. Hence, we observe either $a_i$ or $b_i$ but not both. Further, the design may be adaptive or non-adaptive, i.e., the experimenter can choose to assign an individual based on all the previously assigned individuals and their responses.
We also use $[n]$ to denote the set $\{1,2,\ldots,n\}$ and for a distribution $\eta$ we use $\eta^{\otimes n}$ to denote the $n$-fold product distribution.

\subsubsection{Is Median Estimable?}
\label{subsection:median_estimable}
Consider two distributions $\mu_1$: $\begin{bmatrix}
1/3 & 0 \\
1/3 & 1/3 
\end{bmatrix} \mbox{ and } \mu_2: \begin{bmatrix}
0 & 1/3 \\
2/3 & 0
\end{bmatrix}$, that encode the joint distributions of the pair $(\mathbf{a}, \mathbf{b})$. So, essentially, we draw $n$ samples from $\mu_1$ or $\mu_2$ to construct $(\mathbf{a}, \mathbf{b})$. Here, the first row encodes the probabilities: For an $i$th individual drawn from $\mu_1$:  $\Pr_{\mu_1}[(a_i, b_i) = (0, 0)] = \mu_1[0, 0] = 1/3$, $\Pr_{\mu_1}[(a_i, b_i) = (0, 1)] =\mu_1[0, 1] =  0$, and so on.%$\Pr_{\mu_1}[(a_i, b_i) = (1, 0)] =\mu_1[1, 0] =  1/3$.

Observe that for both $\mu_1$ and $\mu_2$, the marginal distributions are the same, i.e., $\Pr_{\mu_1}[a_i = 0] = \Pr_{\mu_2}[a_i=0] = 1/3$ and $\Pr_{\mu_1}[b_i = 0] = \Pr_{\mu_2}[b_i=0] = 2/3$. Therefore, the ATE of $\mathbf{a- b}$ for both the distributions is the same and equals $1/3$. However, the MTE for samples drawn from $\mu_1$ is $0$, and for $\mu_2$ is $1$. This is because the total fraction of $1s$ in a typical sample from $\mu_2$ is $2/3$ and hence contains the median, which is the middle element in the sorted order.  However, in $\mu_1$, the fraction of $-1$s is $0$ and the fraction of $0$s is $\mu_1[0, 0] + \mu_1[1, 1] = 2/3$, which exceeds the $1/2$ threshold, therefore, the MTE for $\mu_1$ is $0$. This example illustrates the inherent difficulty in estimating the median from the observed responses in the population. We formalize the notion and provide a brief intuition.

We can think of any algorithm for estimating the MTE as a rooted and labeled decision tree of depth $n$. Each non-leaf node $u$, is labeled with a distribution $\gamma_u$ defined over $\{a,b\}\times[n]$, where $a$ indicates treatment and $b$ indicates control. For each element say $(c,i)$ in the support there are two edges out of $u$ labelled $(c,i,0)$ and $(c,i,1)$ respectively. 
We start from the root, and whenever we are at a non-leaf node $u$, our algorithm samples  $(c,i)$ according to $\gamma_u$ and assigns the $i$th individual to treatment if $c=a$ and control if $c=b$. Suppose we see a response value of $0$ in individual $i$, then, we follow the edge $(c, i, 0)$; if $i$th we observe a response value $1$, we follow the edge $(c, i, 1)$. Proceeding in this way we trace a path, say $P$, from the root to a leaf of the decision tree. 
The feasibility constraint is that along any path $P$ that has a non-zero probability of being followed, any individual $i\in [n]$ is assigned exactly once. We define such 
decision trees as \emph{Randomized Feasible Decision Tree} (RFDTs) (see \Cref{defn:feasible_decision_tree,defn:randomized_fdt} for a precise definition).
At the end of following a path $P$, we output an estimate for the MTE say $\hat m\in \{-1,0,1\}$ (we can think of the leaf node of $P$ as being labeled with the output). 
Notice that our definition of RFDTs is general enough to capture non-adaptive designs. In fact, non-adaptive designs are those where the distribution $\gamma_u$ depends only on the depth of $u$ in the decision tree.

A crucial observation is that any RFDT, $R_n$ is incapable of distinguishing between two joint distributions over $\{0,1\}\times \{0,1\}$, say $\eta$ and $\tau$, which have the same marginals, i.e., if we sample $(a,b)\sim \eta$ and call the marginal of $a$ as $\eta_a$ and of $b$ as $\eta_b$ (similarly for $\tau_a$ and $\tau_b$), then, $\eta_a=\tau_a$ and $\eta_b=\tau_b$.
Specifically, any RFDT $R_n$ is unable to distinguish whether the input is drawn from the distribution $\eta^{\otimes n}$ or $\tau^{\otimes n}$. The reason is that for any path $P$ of $R_n$ the probability that $P$ is followed on the input $(\mathbf{a,b})$ drawn from $\eta^{\otimes n}$ is a function of the distributions associated with the nodes of $P$, and the marginal distributions $\eta_a\text{ and }\eta_b$. Since, this probability only depends on the marginals $\eta_a = \tau_a$ and $\eta_b=\tau_b$, hence the probability of following $P$ on $\tau^{\otimes n}$ remains the same (see \Cref{lem:indistinguishability_of_joints} for a detailed proof)

\subsubsection{Approximating the Median}\label{subsection:median_approximation}

As we saw previously, for two distributions $\eta$ and $\tau$, any algorithm (RFDT) is incapable of differentiating between $\eta^{\otimes n}$ and $\tau^{\otimes n}$.
We can only (approximately) estimate the marginals from the responses, and thus any algorithm that estimates MTE, must in principle use only the marginals. However, the MTEs of treatment and control vectors drawn from $\eta^{\otimes n}$ and $\tau^{\otimes n}$ can be quite different, as shown in the example taken previously.
Therefore, we realize that an unbiased estimation of the MTE is impossible. 
Notice the contrast with ATE estimation, as the ATE value is the same whether the data is generated according to $\eta^{\otimes n}$ and $\tau^{\otimes n}$. This is because the ATE is linear in the outcomes $\mathbf{a}, \mathbf{b}$ but the MTE is not.

Faced with the above roadblock in estimating the MTE, a natural question to ask if we can obtain an additive approximation to the MTE, i.e., output an $m\in \R$ such that the MTE is within $\pm\epsilon$ of the MTE for an approximation parameter $\epsilon>0$. 
However, for the situation of $k=2$ (or other values of $k$) this is not a desirable notion as we already know that the MTE is one of $\{-1,0,1\}$, so either we need to set $\epsilon=1$, which is too large, or know the MTE exactly. 

To overcome this, we introduce a notion of approximation that approximates the median by being close to in terms of quantiles, i.e., the rank in the sorted order. More precisely, we define $\hat m$ to be an $\epsilon$-\emph{quantile-approximation} of the median if it lies in the $0.5\pm \epsilon$-quantile band around the median, denoted by $(\mathbf{a-b})^{0.5+\epsilon}_{0.5-\epsilon}$. An $\epsilon$-quantile band around the median is the set of $\epsilon \cdot n$ values of the vector $\mathbf{a-b}$ in its sorted order, above and below the median.

\subsubsection{Optimality of Median Approximation}\label{subsection:median_optimality}

To accommodate the above notion of approximation we add a parameter to the output of an algorithm (RFDT) for estimating the MTE. In addition to an estimate $\hat{m}$ of the MTE, we also output the \emph{width} parameter $\epsilon$, with our guess being that $\hat{m}\in (\mathbf{a-b})^{0.5+\epsilon}_{0.5-\epsilon}$. 

How should we measure the performance of any algorithm (RFDT), say $R_n$ ($n$ denotes the number of individuals), that approximately estimates MTE? We measure it via two important parameters: how often is $R_n$ correct, i.e., $m$ is indeed an $\epsilon$-quantile approximation to the MTE, and what is the expected width $R_n$ outputs.
More precisely, for some small $\delta>0$ we desire that for \emph{all inputs} $(\mathbf{a,b})$ (there is no randomness in the inputs) with probability at least $1-\delta$ over the internal randomness of $R_n$ we have $\Pr[m\in (\mathbf{a-b})^{0.5+\epsilon}_{0.5-\epsilon}]$ where $(m,\epsilon)$ denotes the random output of $R_n$ on the input $(\mathbf{a,b})$. 
The width of $R_n$ on the input $(\mathbf{a,b})$, denoted by $\epsilon(\mathbf{a,b})$ is $\E[\epsilon]$, where the expectation is over the randomness of the algorithm.

Naturally, we desire to construct an RFDT $R_n$ such that it has a small error ($\delta$) and a small width for any input $(\mathbf{a,b})$. But how small can we make the width?
Recall the input distributions $\mu_1, \mu_2$ discussed in ~\Cref{subsection:median_estimable}. We know that $R_n$ can't distinguish between $\mu_1^{\otimes n}$ and $\mu_2^{\otimes n}$. 
Therefore, if upon observing the responses, $R_n$ decides to output $\hat{m}=0$, then to be correct with high probability it must output the width $\epsilon$ as at least $1/6$: this is because for a typical sample $\mathbf{a,b}$ from $\mu_2^{\otimes n}$ we have that the rank of $0$ in the sorted version of $\mathbf{a-b}$ is $n/3$. 
% Notice that for the input distributions $\mu_1, \mu_2$ discussed in ~\Cref{subsection:median_estimable}, 
Similarly, we observe that the minimum widths required to be output by $R_n$ for each of the estimates $-1, 0,\text{ and } 1$ are $1/2, 1/6,\text{ and } 1/6$ respectively. Hence, $(1/6)^{th}$ is in principle a lower bound on the width of $R_n$ for inputs generated from $\mu_1^{\otimes n}$, $\mu_2^{\otimes n}$, and in fact from any $\mu^{\otimes n}$ that shares the same marginals.

We extend and formalize this notion of the minimum width required for any pair of marginal distributions $\eta_a$ over treatment values and $\eta_b$ over control values. For instance, for the distribution $\mu_1$, $\eta_a$ was $(1/3,2/3)$ (denoting the probabilities of being $0$ and $1$) and $\eta_b$ was $(2/3,1/3)$.
For a particular estimate $r\in \{-1,0,1\}$ we analyze the maximum deviation of its rank from $n/2$ across all joint distributions with the same marginals $\eta_a,\eta_b$. We call this the \emph{width-of-$r$} for $\eta_a,\eta_b$, and the minimum width-of-$r$ as the \emph{minimum median width} of $\eta_a,\eta_b$. See \Cref{defn:median_width} for precise definitions.
We capture the above via our notion of \emph{variability} (see \Cref{defn:variability}) which is a function of only the marginals and the estimate. Informally, it measures how much the rank of a particular estimate $r$ can vary across joint distributions consistent with the given marginals.
Moreover, we show that the minimum median width is always upper bounded by $\frac{1}{2} \cdot \frac{(2k-3)}{(2k-1)}$ when $\mathbf{a}, \mathbf{b}$ are $k$-ary (see \Cref{lem:upper_bd_min_median_width}). For $k=2$, this value turns out to be $1/6$ and the distributions described above are thus also the hard instances. 
Hard instances that obtain the upper bound on the minimum width for every $k$ are also detailed in Section~\ref{sec:variability}. 

So far, we have shown that variability (or minimum median width) essentially serves as a limit to MTE estimation.  
We argue, perhaps surprisingly that, variability also lends a tight computational handle on the MTE in the following sense.
First, in Section~\ref{sec:algorithm}, we provide an algorithm that computes variability exactly in time $O(k)$. We employ a greedy strategy in trying to identify an appropriate joint distribution that explains the marginals and also allows us to compute the variability.
Second, we show the algorithm that outputs the minimum median width of the marginals estimated from the observed responses is instance optimal.
The algorithm is called \medianestimator (\Cref{alg:medianestimator}) and the precise statement of optimality is given in \Cref{thm:lower_bd_width}.

\subsection{Our Contributions}
In this section, we highlight our key results. For the sake of brevity, we assume the following notation for all the below-stated results. Let  $\POk$ denote the set of $k$-ary potential outcomes, $\eta$ be any joint distribution on $\POk\times\POk$ with marginals $\eta_a$ and $\eta_b$ for the treatment and control respectively, and $\epsilon^*(\eta_a, \eta_b)$ denote the minimum median width (see \Cref{defn:median_width}).

In the \emph{instance-optimality} result described below, we show that any algorithm, encoded using an RFDT $R_n$, has an expected error that is at least the minimum median width $\epsilon^*(\eta_a, \eta_b)$. In other words, $\epsilon^*(\eta_a, \eta_b)$ essentially captures the minimum possible approximation error of any algorithm.
\begin{theorem}[Lower Bound on Width of any RFDT (Informal)] 
\label{thm:lbd_widht_informal}
     Let $R_n$ be an RFDT with width $\epsilon^{R_n}(\mathbf{a},\mathbf{b})$ where $(\mathbf{a,b})$ are the treatment and control vectors. If  for all $\mathbf{(a,b)}\in \POk^n \times \POk^n$, with high probability $R_n$ outputs an median estimate $\hat{m}$ and width $\epsilon$ such that $\hat{m}\in (\mathbf{a-b})^{0.5+\epsilon}_{0.5-\epsilon}$, then, we have:
    \[
    \E\limits_{(\mathbf{a},\mathbf{b})\leftarrow \eta^{\otimes n}}[\epsilon^{R_n}(\mathbf{a},\mathbf{b})] \gtrsim  \epsilon^{*}(\eta_a, \eta_b)
    \]
\end{theorem}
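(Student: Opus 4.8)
The plan is to reduce the lower bound to the indistinguishability phenomenon already isolated in the excerpt. The key structural fact is that, by \Cref{lem:indistinguishability_of_joints}, an RFDT $R_n$ run on input drawn from $\eta^{\otimes n}$ behaves — path-for-path, and hence output-for-output — identically to $R_n$ run on $\tau^{\otimes n}$ for \emph{any} joint $\tau$ with the same marginals $\tau_a = \eta_a$, $\tau_b = \eta_b$. So fix the marginals $\eta_a,\eta_b$, and let $r$ be the estimate $R_n$ is ``most inclined'' to output: concretely, condition on the (positive-probability) event that $R_n$'s median estimate equals some particular $r \in \{-1,0,1\}$ (if $k$-ary, $r$ ranges over the possible values of the difference), pick the $r$ maximizing this conditional probability weighted appropriately, or simply argue about the $r$ that is output with probability $\geq 1/3$ on $\eta^{\otimes n}$. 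The first main step, then, is: \textbf{because $R_n$ cannot see which $\tau$ generated the data, whenever it outputs $(r,\epsilon)$ it must ``hedge'' against every $\tau$ consistent with $\eta_a,\eta_b$.}

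The second step quantifies this hedging. For the fixed $r$ and the fixed marginals, let $\tau^{(r)}$ be the joint distribution — guaranteed to exist and be computable by the greedy algorithm of Section~\ref{sec:algorithm} — that \emph{maximizes} the deviation of the rank of $r$ in the sorted order of $\mathbf{a-b}$ away from $n/2$; this maximum deviation is exactly the width-of-$r$ appearing in \Cref{defn:median_width}, and by definition the minimum over $r$ of this quantity is $\epsilon^*(\eta_a,\eta_b)$. Now run the following two-world argument. On input from $\tau^{(r),\otimes n}$, a Chernoff/concentration bound shows that with probability $1-o(1)$ the empirical rank of $r$ is within, say, $o(n)$ of $n\cdot(\tfrac12 \pm \text{width-of-}r)$. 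If on this input $R_n$ ever outputs $(r,\epsilon)$ with $\epsilon < \text{width-of-}r - o(1)$, then with probability bounded away from $0$ it is \emph{wrong} — $r \notin (\mathbf{a-b})^{0.5+\epsilon}_{0.5-\epsilon}$ — contradicting the high-probability-correctness hypothesis, \emph{provided} $R_n$ outputs $r$ with non-negligible probability on $\tau^{(r),\otimes n}$. But by indistinguishability that probability equals the probability $R_n$ outputs $r$ on $\eta^{\otimes n}$. Assembling: on $\eta^{\otimes n}$, conditioned on outputting estimate $r$ (an event of probability $p_r$, with $\sum_r p_r = 1$), the expected width is $\gtrsim \text{width-of-}r \geq \epsilon^*(\eta_a,\eta_b)$; taking the total expectation over $r$ gives $\E_{\eta^{\otimes n}}[\epsilon^{R_n}] \gtrsim \epsilon^*(\eta_a,\eta_b)$, which is the claim.

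A few technical points to nail down. One must be careful that ``conditioned on outputting $r$'' is a legitimate conditioning — if $p_r$ is exponentially small the concentration slack $o(n)$ must be chosen finer than $p_r$; the clean fix is to only run the argument for the $r$'s with $p_r \geq \delta$ (at least one such $r$ exists since $\sum_r p_r = 1$ and there are $O(k)$ values), and to absorb the rest into the $\gtrsim$. Also, the width-of-$r$ is defined as a supremum of a rank deviation over joints with given marginals; I need that this supremum is attained (or approached) by a \emph{product} distribution $\tau^{(r),\otimes n}$ so that Chernoff applies coordinatewise — this is exactly why the hard instances in Section~\ref{sec:variability} are i.i.d.\ constructions, and the greedy algorithm of Section~\ref{sec:algorithm} produces a single joint on $\POk\times\POk$ to be used i.i.d. Finally, the ``$\gtrsim$'' hides a $(1-o(1))$ factor and a dependence on $\delta$; the informal statement tolerates this, and for the formal version (\Cref{thm:lower_bd_width}) one would track that the loss is $1 - O(\delta) - o_n(1)$.

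The step I expect to be the main obstacle is the uniformity/attainment issue in the second paragraph: converting the combinatorial ``width-of-$r$'' (a worst case over all joints, including highly correlated ones) into a \emph{single i.i.d.\ product instance} on which concentration kicks in, while simultaneously keeping the estimate-$r$ probability non-negligible. If the worst-case joint were inherently non-product this whole reduction would fail; the saving grace — and the reason the paper routes through \emph{variability} and its greedy computation — is precisely that for $k$-ary outcomes the extremal joint can be taken to be a product of a single $\POk\times\POk$ law, so the hard distribution is of the form $\tau^{\otimes n}$ and Lemma~\ref{lem:indistinguishability_of_joints} applies verbatim.
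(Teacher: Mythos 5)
Your proposal is correct and follows essentially the same route as the paper's proof of \Cref{thm:lower_bd_width}: invoke \Cref{lem:indistinguishability_of_joints} to replace $\eta^{\otimes n}$ by the extremal joint $\left(\eta^{(r)}\right)^{\otimes n}$ witnessing the width-of-$r$ (which is indeed a single law on $\POk\times\POk$ used i.i.d., so Chernoff applies), and union bound over the $O(k)$ candidate estimates. The only difference is bookkeeping: instead of conditioning on the output being $r$ (and worrying about small $p_r$), the paper directly bounds the unconditional probability $\gamma_r$ of the bad event ``output $m=r$ with width below $\epsilon(r,\eta_a,\eta_b)-\beta$'' by $\delta+\exp(-2\beta^2 n)$, which cleanly sidesteps the conditioning issue you flag.
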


 Let $\psi_k = \frac{1}{2}\cdot \frac{2k-3}{2k-1}$. 
 The theorem below states that the minimum median width $\epsilon^*(\eta_a, \eta_b)$ is always upper bounded by $\psi_k$ and also shows a hardness result by identifying marginals $\bar{\eta}_a, \bar{\eta}_b$ that satisfy the upper bound.   
\begin{theorem}[Tight Bounds for Minimum Median Width (Informal)]For any $\eta_a, \eta_b$, we have: $\epsilon^*(\eta_a,\eta_b)$  $\le \psi_k$. And $\exists \bar{\eta}_a, \bar{\eta}_b$, such that $\epsilon^*(\bar{\eta}_a, \bar{\eta}_b) \geq \psi_k$, which implies $\epsilon^*(\bar{\eta}_a, \bar{\eta}_b) = \psi_k$.
Hence, when $k=2$, by \Cref{thm:lbd_widht_informal} and the above, $(1/6)^{th}$ is the fundamental limit on approximating the MTE in the quantile sense.
\end{theorem}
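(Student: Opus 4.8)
\emph{Reformulation.} The plan is to reduce the claim to a statement about the extremal cumulative distributions of $\mathbf{a-b}$. Write $F_a,F_b$ for the CDFs of the marginals $\eta_a,\eta_b$, and for an integer $s$ let $\ov{F}(s)$ (resp.\ $\underline{F}(s)$) be the largest (resp.\ smallest) value of $\Pr_\pi[a-b\le s]$ over all joint distributions $\pi$ with marginals $\eta_a,\eta_b$. Unwinding \Cref{defn:median_width}, the largest deviation of the rank of an estimate $r$ from $n/2$ achievable by such a $\pi$ equals $n\cdot\max\!\bigl(\ov{F}(r-1)-\tfrac12,\ \tfrac12-\underline{F}(r),\ 0\bigr)$ (the two inner terms pushing $r$ past the median from below and from above), so $\epsilon^*(\eta_a,\eta_b)=\min_{r}\max\!\bigl(\ov{F}(r-1)-\tfrac12,\ \tfrac12-\underline{F}(r),\ 0\bigr)$ with $r$ ranging over $\{-(k-1),\dots,k-1\}$. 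A one-line union bound over a threshold (if $a\le b+s$ then $a\le j$ or $b\ge j+1-s$, and dually) gives $\ov{F}(s)\le\min_{j\in\Z}\bigl(F_a(j)+1-F_b(j-s)\bigr)$ and $\underline{F}(s)\ge\max_{j\in\Z}\bigl(F_a(j)-F_b(j-s-1)\bigr)$. I will also use the identities $\tfrac12+\psi_k=\tfrac{2k-2}{2k-1}$ and $\tfrac12-\psi_k=\tfrac{1}{2k-1}$.

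\emph{Upper bound.} Since $\ov{F}$ and $\underline{F}$ are non-decreasing, $\{r:\ov{F}(r-1)\le\tfrac12+\psi_k\}$ is a nonempty prefix of $\{-(k-1),\dots,k-1\}$ and $\{r:\underline{F}(r)\ge\tfrac12-\psi_k\}$ a nonempty suffix; if they overlap, any $r$ in the overlap has width $\le\psi_k$ and we are done. Otherwise there is an $s\in\{-(k-1),\dots,k-2\}$ with $\ov{F}(s)>\tfrac12+\psi_k$ and $\underline{F}(s)<\tfrac12-\psi_k$. Writing $t:=-s\in\{2-k,\dots,k-1\}$ and feeding this into the two inequalities above, for \emph{every} $j$ we get $F_a(j)-F_b(j+t)>-\tfrac{1}{2k-1}$ and $F_a(j)-F_b(j+t-1)<\tfrac{1}{2k-1}$; subtracting these forces $\Pr[b=\ell]<\tfrac{2}{2k-1}$ at every value $\ell$. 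I then split on the sign of $t$. If $t\ge 1$, the first inequality at $j=-1$ (with $F_a(-1)=0$) gives $\Pr[b\le t-1]<\tfrac{1}{2k-1}$, so the mass on the $k-t$ values $\{t,\dots,k-1\}$ would be simultaneously $>\tfrac{2k-2}{2k-1}$ and $<(k-t)\cdot\tfrac{2}{2k-1}\le\tfrac{2k-2}{2k-1}$ — impossible. If $t\le 0$, the second inequality at $j=k-1$ (with $F_a(k-1)=1$) gives $\Pr[b\le t+k-2]>\tfrac{2k-2}{2k-1}$, so the mass on the $t+k-1$ values $\{0,\dots,t+k-2\}$ would be simultaneously $>\tfrac{2k-2}{2k-1}$ and $<(t+k-1)\cdot\tfrac{2}{2k-1}\le\tfrac{2k-2}{2k-1}$ — again impossible. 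Hence the prefix and suffix overlap, and $\epsilon^*(\eta_a,\eta_b)\le\psi_k$.

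\emph{Lower bound and the $k=2$ limit.} For the hard instance, let $\bar{\eta}_a$ place mass $\tfrac{2}{2k-1}$ on each of $0,\dots,k-2$ and $\tfrac{1}{2k-1}$ on $k-1$, and let $\bar{\eta}_b$ place mass $\tfrac{1}{2k-1}$ on $0$ and $\tfrac{2}{2k-1}$ on each of $1,\dots,k-1$ (for $k=2$ these are the marginals of $\mu_1,\mu_2$ from \Cref{subsection:median_estimable}). Now exhibit two couplings. First, $\pi^\ast$ sending $(\ell,\ell{+}1)\mapsto\tfrac{2}{2k-1}$ for $\ell=0,\dots,k-2$ and $(k{-}1,0)\mapsto\tfrac{1}{2k-1}$; under it $a-b$ is $-1$ with probability $\tfrac{2k-2}{2k-1}$ and $k-1$ otherwise, so for every $r\in\{0,\dots,k-1\}$ the rank of $r$ in the sorted $\mathbf{a-b}$ is at least $\tfrac{2k-2}{2k-1}n$, a distance $\ge\psi_k n$ from $n/2$. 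Second, $\pi_{\ast\ast}$ placing the maximal diagonal mass on each $(\ell,\ell)$ (totaling $\tfrac{2k-2}{2k-1}$) and routing the leftover $\tfrac{1}{2k-1}$ of $a=0$ and of $b=k-1$ to $(0,k{-}1)$; under it $a-b$ is $0$ with probability $\tfrac{2k-2}{2k-1}$ and $-(k-1)$ otherwise, so for every $r\in\{-(k-1),\dots,-1\}$ the rank of $r$ is at most $\tfrac{1}{2k-1}n$, again a distance $\ge\psi_k n$ from $n/2$. Thus width-of-$r\ge\psi_k$ for every $r$, so $\epsilon^*(\bar{\eta}_a,\bar{\eta}_b)\ge\psi_k$, and with the upper bound $\epsilon^*(\bar{\eta}_a,\bar{\eta}_b)=\psi_k$. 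For $k=2$ this is $\tfrac16$: by \Cref{thm:lbd_widht_informal} (applied to any joint distribution with marginals $\bar{\eta}_a,\bar{\eta}_b$), every RFDT that is correct with high probability on all inputs has expected width $\gtrsim\tfrac16$ on such inputs; combined with $\epsilon^*\le\tfrac16$ always and the instance-optimal \medianestimator of \Cref{thm:lower_bd_width} (which achieves width close to $\epsilon^*$), this pins the fundamental limit at $\tfrac16$.

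\emph{Main obstacle.} The crux is the pigeonhole step in the upper bound: $\psi_k$ is exactly the value at which the budget ``$\Pr[b=\ell]<\tfrac{2}{2k-1}$ for all $\ell$'' stops being self-contradictory, so the case split on $\operatorname{sign}(t)$, the choice of boundary index $j\in\{-1,k-1\}$, and the edge cases $t\in\{1,k-1\}$, $t\in\{2-k,0\}$ all have to be handled exactly — there is no slack to exploit. A secondary task is bookkeeping against \Cref{defn:median_width}: confirming that ``rank of $r$'' uses the paper's convention for a value absent from $\mathbf{a-b}$ (its insertion point), and noting that the upper bound only needs the union-bound side of the coupling inequalities while the two explicit couplings make the lower bound free of any Strassen-type attainment result.
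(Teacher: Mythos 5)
Your proof is correct, and your upper-bound argument is genuinely different from the paper's. The paper (\Cref{lem:upper_bd_min_median_width}) derives the contradiction from a first-moment identity: assuming every $r$ has width $>\psi_k$, it locates a threshold $r$ with $\nu_u(r-1)<\tfrac{1}{2k-1}$ and $\nu_\ell(r)>\tfrac{2k-2}{2k-1}$, evaluates $\E[X]-\E[Y]$ under two couplings witnessing these, and uses the fact that this expectation depends only on the marginals to get $\tfrac{2k-2}{2k-1}<\tfrac{2k-2}{2k-1}$. You instead work at the level of CDFs: the easy (union-bound) side of the Fr\'echet inequalities converts the same non-overlap hypothesis into the pointwise constraints $\Pr[b=\ell]<\tfrac{2}{2k-1}$ plus a boundary constraint, and a pigeonhole count on at most $k-1$ values of $\eta_b$ kills it. The paper's route is shorter and leans on the linearity that makes the ATE estimable; yours is more structural, gives pointwise information about where the mass of $\eta_b$ must sit, never needs the max/min in \Cref{defn:variability} to be attained, and adapts more transparently to quantiles other than $1/2$. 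Your lower bound is essentially the paper's \Cref{thm:lower_bound_minimum_median_width} — an extremal pair of marginals certified by two explicit couplings — except that your $\bar{\eta}_a,\bar{\eta}_b$ are the paper's with the roles of $a$ and $b$ swapped (your couplings are verified correct for your pair, so nothing breaks). Two cosmetic slips: your parenthetical that these are ``the marginals of $\mu_1,\mu_2$'' from \Cref{subsection:median_estimable} has $a$ and $b$ reversed relative to that example, and your closing reference to the \medianestimator points at \Cref{thm:lower_bd_width} (the lower-bound theorem) rather than the algorithmic guarantee in \Cref{sec:algorithm}. Neither affects the validity of the argument.
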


Finally, given observed potential outcome vectors (obtained using a Bernoulli design) $\hat{\mathbf{a}}, \hat{\mathbf{b}}$,  we provide a highly efficient greedy algorithm with linear running time, that returns a median estimate with a width roughly equal to the minimum median width of the empirical frequency distribution.
In light of \Cref{thm:lbd_widht_informal}, we know that essentially this is optimal.   
\begin{theorem}[Median Estimator Algorithm (Informal)]
Consider the \emph{Bernoulli Design} for assignment, i.e., each individual is assigned to control/treatment groups independently with probability $1/2$. 
Fix arbitrary treatment and control vectors $\mathbf{a,b}$ and let the observed response vectors be $\hat{\mathbf{a}}, \hat{\mathbf{b}}$. Then, there is an algorithm that returns a median estimator $\hat m$, with width $\epsilon \approx \epsilon^*(\eta_a,\eta_b) $ and runs in $O(n+k^2)$ time. Further, with high probability $\hat{m}\in (\mathbf{a-b})^{0.5+\epsilon}_{0.5-\epsilon}$.
\end{theorem}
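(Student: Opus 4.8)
The plan is to give the algorithm explicitly and then bound its two error sources separately: the statistical error from observing only one coordinate per individual under the Bernoulli design, and the structural gap $\epsilon^*(\eta_a,\eta_b)$ that \Cref{thm:lbd_widht_informal} already shows is unavoidable; throughout, $\eta_a,\eta_b$ denote the empirical frequency distributions of the fixed vectors $\mathbf a,\mathbf b$, and $w_r(\nu_a,\nu_b)$ denotes the width-of-$r$ for marginals $\nu_a,\nu_b$, so $\epsilon^*(\nu_a,\nu_b)=\min_r w_r(\nu_a,\nu_b)$. The estimator proceeds as follows. Under the Bernoulli design, let $T\subseteq[n]$ be the treatment set and $C=[n]\setminus T$ the control set, so we see $a_i$ for $i\in T$ and $b_i$ for $i\in C$. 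In one linear pass, form the empirical marginals $\tilde\eta_a$ (frequencies of the $k$ values among $\{a_i:i\in T\}$) and $\tilde\eta_b$ (likewise on $C$). Then run the variability routine of Section~\ref{sec:algorithm} on $(\tilde\eta_a,\tilde\eta_b)$ to obtain $w_r(\tilde\eta_a,\tilde\eta_b)$ for every candidate estimate $r$, set $\hat m:=\argmin_r w_r(\tilde\eta_a,\tilde\eta_b)$ and $\tilde\epsilon_0:=w_{\hat m}(\tilde\eta_a,\tilde\eta_b)=\epsilon^*(\tilde\eta_a,\tilde\eta_b)$; the reported width is $\epsilon:=\tilde\epsilon_0+\xi$ for a slack $\xi$ fixed below. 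Reading and bucketing the data costs $O(n)$ and the variability routine costs $O(k^2)$, giving the claimed $O(n+k^2)$.

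For the statistical step, note that for each value $v$ the count $\abs{\{i\in T: a_i=v\}}$ is a sum of $n$ independent indicators and $\abs{T}\sim\mathrm{Binomial}(n,1/2)$; a Chernoff/Hoeffding bound together with a union bound over the $O(k)$ relevant events gives, with probability at least $1-\delta$, that $\norm{\tilde\eta_a-\eta_a}_\infty,\norm{\tilde\eta_b-\eta_b}_\infty\le\xi':=O\!\left(\sqrt{\log(k/\delta)/n}\right)$ and $\abs{\,\abs{T}/n-1/2\,}\le\xi'$. Condition on this event for the rest of the argument; the deviation of $\abs{T}/n$ from $1/2$ only rescales ranks by a factor $1\pm O(\xi')$, which is absorbed into the slack.

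For the structural step I would invoke the closed form for $w_r$ from Section~\ref{sec:variability}: it is the maximum, over joint distributions consistent with the given marginals, of the normalized deviation of the rank of $r$ from $1/2$, and this maximum is an explicit expression built from $\max$ and $\min$ of affine functions of the cumulative sums of the $k$ marginal probabilities. Consequently $w_r$ is Lipschitz in the marginals, with $\abs{w_r(\tilde\eta_a,\tilde\eta_b)-w_r(\eta_a,\eta_b)}=O(k\xi')$ for every $r$, and hence $\abs{\tilde\epsilon_0-\epsilon^*(\eta_a,\eta_b)}=O(k\xi')$. Choosing $\xi=\Theta(k\xi')$ then yields $\epsilon=\tilde\epsilon_0+\xi=\epsilon^*(\eta_a,\eta_b)+O(k\xi')\approx\epsilon^*(\eta_a,\eta_b)$, the error vanishing as $n\to\infty$. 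For correctness, observe that the true vector $\mathbf a-\mathbf b$ realizes one particular joint consistent with $(\eta_a,\eta_b)$, so the deviation of the rank of $\hat m$ in the sorted $\mathbf{a-b}$ from $n/2$ is at most $n\cdot w_{\hat m}(\eta_a,\eta_b)\le n(\tilde\epsilon_0+O(k\xi'))\le n\epsilon$; that is, $\hat m\in(\mathbf{a-b})^{0.5+\epsilon}_{0.5-\epsilon}$ on the conditioned event, which occurs with probability at least $1-\delta$. Optimality of this width, up to the vanishing slack, is then immediate from \Cref{thm:lbd_widht_informal}.

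The main obstacle is the Lipschitz estimate for $w_r$ in the marginals, with a modulus only $\poly(k)$ (ideally linear) in the $\ell_\infty$ perturbation — everything else is a routine concentration-and-union-bound argument. It rests squarely on having the explicit combinatorial description of variability established in Section~\ref{sec:variability}; a minor but fiddly point is tracking the $O(1/n)$ and $O(\xi')$ discretization slacks that arise because $\abs{T}$ is not exactly $n/2$ and ranks are integers.
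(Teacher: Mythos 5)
Your skeleton matches the paper's proof: form empirical marginals under the Bernoulli design, apply Chernoff plus a union bound over the $O(k)$ values, argue that the variability (and hence the minimum median width) is stable under an $\ell_\infty$ perturbation of the marginals, inflate the reported width by the resulting slack, and get correctness from the fact that the true $\mathbf{a}-\mathbf{b}$ realizes one joint consistent with $(\eta_a,\eta_b)$ (this is exactly \Cref{prop:joint_dist_equiv_of_a,b} plus the remark after \Cref{defn:median_width}). The running-time accounting ($O(n)$ for bucketing, $O(k)$ per candidate $r$ over $2k-1$ candidates) is also the paper's.

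The one step you flag as the main obstacle is indeed the only real gap, and your proposed justification for it does not go through as stated: \Cref{sec:variability} does not derive a closed form for the width-of-$r$ as a max/min of affine functions of cumulative marginal probabilities; it only defines $\nu_\ell,\nu_u$ as optima of the transportation LPs and bounds $\epsilon^*$. (Such a Makarov/Fr\'echet-type formula does exist and would give the Lipschitz bound you want, but you would have to prove it, which is comparable in effort to the whole theorem.) The paper instead proves stability directly by a coupling argument: let $\eta$ be an optimizer of \Cref{LP:1} for $\nu_\ell(r,\eta_a,\eta_b)$, take optimal couplings $\mathcal{J}_a$ of $(\eta_a,\rho_a)$ and $\mathcal{J}_b$ of $(\eta_b,\rho_b)$ (each with disagreement probability at most $\tfrac{1}{2}k\beta$ when the pointwise gap is $\beta$), and push $\eta$ through them to obtain a joint $\rho$ with marginals $(\rho_a,\rho_b)$ satisfying $|\rho-\eta|_{TV}\le k\beta$; this shows $|\nu_\ell(r,\eta_a,\eta_b)-\nu_\ell(r,\rho_a,\rho_b)|\le k\beta$ by symmetry, and likewise for $\nu_u$, which is precisely the Lipschitz modulus $O(k\xi')$ you need. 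With that lemma substituted for your appeal to a closed form, the rest of your argument (including the $|T|\neq n/2$ discretization slack, which the paper absorbs the same way) is sound.
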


%%%%%%%%%%%%%%%%%%%%%%%%%%%%%%%%%%%%%%%

%%%%%%%%%%%%%% Preliminaries %%%%%%%%%%
\section{Preliminaries and Problem Formulation}

We use $a$ (and $b$) to denote the treatment (and control) value for a particular individual and $\mathbf{a} \ (\text{and } \mathbf{b})$ to denote the treatment (and control) values of all the $n$ individuals. As we do not observe all the treatment and control response values for every individual, we denote the partially observed responses using $\hat{\mathbf{a}}$ and $\hat{\mathbf{b}}$. Let $[n] = \rset{1,2,\ldots,n}$ and  let $\forall x \in \mathbf{R}$,  $(x)^{+}  = \max \{ x, 0 \}$. For a distribution $\eta$ we use $\eta^{\otimes n}$ to denote the $n$-fold product distribution.

\begin{definition}[$k$-ary outcomes]
Let $\POk \eqtriangle \{0, 1, \cdots, k-1\}$ be the set of all $k$-ary outcomes for both treatment and control groups, i.e., $\mathbf{a}$ and $\mathbf{b}$. Each potential outcome value for an individual $i \in [n]$ lies in the set $\POk$, i.e., $a_i, b_i \in \POk \quad \forall i \in [n]$. Therefore, $\ITE{i} \in \{-(k-1), \cdots, k-1\} \ \forall i \in [n]$.
\end{definition}

We make the following assumptions that are common in causality literature. 

\begin{assumption}[Independence]
 The experimental design is independent of the potential outcomes ${\mathbf{a}},{\mathbf{b}}$.
\end{assumption}

\begin{assumption}[SUTVA] The treatment assignment of a single individual does not affect the outcomes of other units.
\end{assumption}

Now, we begin with the definition of a quantile of any vector.

\begin{definition}[Quantiles of a vector]
\label{defn:upper_lower_qunatiles}
Let $\mathbf{v} \in \R^n$ be a vector, and let $\ell, u \in [0,1]$ be numbers such that $\ell\leq u$. For any $r \in \R$ define $q_{\ell}(r,\mathbf{v}) \coloneqq \frac{\abs{\rset{i\in [n]\mid \mathbf{v}_i < r}}}{n}$ and $q_{u}(r,\mathbf{v}) \coloneqq \frac{\abs{\rset{i\in [n]\mid \mathbf{v}_i \leq r}}}{n}$. Finally, define the set of \emph{$\ell$-to-$u$ quantiles of $\mathbf{v}$} as:
\[ \uplowquant{\mathbf{v}}{\ell}{u} \coloneqq \rset{r\in \R \mid [\ell, u] \cap [q_\ell(r,\mathbf{v}),q_u(r,\mathbf{v})] \neq \emptyset}.\]
\end{definition}

Next, we describe the concept of variability. Before that, we introduce some helpful related notions.
Let $\eta$ be a joint distribution over $\POk\times \POk$. We interpret it as a joint distribution  on $a, b$, i.e., 
\[\eta_{xy} \eqtriangle Pr[a = x, b = y]. \]

The marginal distribution denoted by $\eta_a$ is defined as:
$\forall x \in \POk: \eta_a (x) \eqtriangle{} \sum\limits_{y \in \POk} \eta_{xy}$. 
$\eta_b$ is defined similarly.
Let $\Delta_k$ be the simplex of all probability distributions over $\POk$:

   \[
\Delta_k \eqtriangle{} \left\{(g_0, g_1, \ldots, g_{k-1}) \mid g_i \geq 0 \text{ for all } i, \text{ and } \sum_{i=0}^{k-1} g_i = 1\right\}.
\]

Given marginal distributions $\eta_a$ and $\eta_b$, we define
 $\mathcal{J}(\eta_a, \eta_b)$ as the set of joint distributions $\eta$ with marginals $\eta_a$ and $\eta_b$ for the potential outcomes.
 
\begin{definition}[Lower Quantile]
Given a joint distribution $\eta$, the lower quantile of a particular value $r \in  \{-(k-1), \cdots, k-1\}$ in the range of ITE, is defined as the total probability mass of all possible $(x, y) \in \POk \times \POk$ that have an ITE value strictly less than $r$. Formally, we have:
 \[q_\ell(r,\eta) \eqtriangle{} \sum_{x \in \POk, y \in \POk} \mathbf{1} \{x - y < r\} \eta_{xy}.\]
\end{definition}

\begin{definition}[Upper Quantile]
Given a joint distribution $\eta$, the upper quantile of a particular value $r \in  \{-(k-1), \cdots, k-1\}$ in the range of ITE, is defined as the total probability mass of all possible $(x, y) \in \POk \times \POk$ that have an ITE value of at most $r$. Formally, we have:
 \[q_u(r,\eta) \eqtriangle{} \sum_{x \in \POk, y \in \POk} \mathbf{1} \{x - y \le r\} \eta_{xy}.\]
\end{definition}

Next, we prove a proposition that is useful when we want to translate between the lower quantile of a vector and a lower quantile of a distribution.
It roughly states that for a typical sample $(\mathbf{a,b})$ from $\eta^{\otimes n}$ (where $\eta$ is a joint distribution over $\POk\times \POk$) we have $q_\ell(r,\eta) = q_\ell(r,\mathbf{a}-\mathbf{b})$ and $q_u(r,\eta) = q_u(r,\mathbf{a}-\mathbf{b})$. Further, $(\mathbf{a,b}) \in \POk^n\times \POk^n$ there is a joint distribution $\eta$ such that $(\mathbf{a,b})$ is typical for $\eta^{\otimes n}$. 
\begin{proposition}
\label{prop:joint_dist_equiv_of_a,b}
     Let $\eta$ be a joint distribution over $\POk\times\POk$. A sample $(\mathbf{a,b})$ from $\eta^{\otimes n}$ is said to be typical if for all $(x,y)\in \POk\times \POk$ we have that the empirical frequency of $\rho(x,y) = \frac{|i\in [n]: (a_i,b_i)= (x,y)|}{n}$ is $\eta_{xy}$. 
     Suppose  $\mathbf{a,b}$ is a typical sample from $\eta^{\otimes n}$, then, for all $r\in \R$ we have $q_\ell(r,\eta) = q_\ell(r,\mathbf{a}-\mathbf{b})$ and $q_u(r,\eta) = q_u(r,\mathbf{a}-\mathbf{b})$.
     Further, for any $(\mathbf{a,b}) \in \POk^n\times \POk^n$ there is a joint distribution $\eta$ such that $(\mathbf{a,b})$ is typical for $\eta^{\otimes n}$.
\end{proposition}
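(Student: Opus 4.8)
The plan is to prove all three statements by direct counting, observing that each identity merely regroups a sum of the same indicators and that the existence claim is witnessed by the empirical distribution itself.

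First I would handle the two quantile identities. Fix $r \in \R$ and for each $(x,y) \in \POk \times \POk$ let $N(x,y) = |\{i \in [n] : (a_i,b_i) = (x,y)\}|$, so that by the definition of the empirical frequency $N(x,y) = n\,\rho(x,y)$, and by typicality $N(x,y) = n\,\eta_{xy}$. Since the sets $\{i : (a_i,b_i)=(x,y)\}$ partition $[n]$, we get
\[
|\{i \in [n] : a_i - b_i < r\}| = \sum_{(x,y)} \mathbf{1}\{x-y < r\}\, N(x,y) = n \sum_{(x,y)} \mathbf{1}\{x-y<r\}\,\eta_{xy}.
\]
Dividing by $n$ and comparing with \Cref{defn:upper_lower_qunatiles} and the definition of $q_\ell(\cdot,\eta)$ yields $q_\ell(r,\mathbf{a}-\mathbf{b}) = q_\ell(r,\eta)$; repeating verbatim with ``$\le$'' in place of ``$<$'' yields $q_u(r,\mathbf{a}-\mathbf{b}) = q_u(r,\eta)$. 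This argument is valid for every real $r$: although $q_\ell(\cdot,\eta)$ and $q_u(\cdot,\eta)$ were introduced only for integer arguments, the defining formula is a finite sum of indicators that makes sense for all $r$, and in fact both sides are step functions that change only as $r$ crosses an element of $\{-(k-1),\dots,k-1\}$.

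Second, for the existence claim I would, given $(\mathbf{a},\mathbf{b}) \in \POk^n \times \POk^n$, simply take $\eta_{xy} := N(x,y)/n$ with $N(x,y)$ as above. This is a valid joint distribution on $\POk \times \POk$: nonnegativity is immediate, and $\sum_{(x,y)} \eta_{xy} = \frac{1}{n}\sum_{(x,y)} N(x,y) = 1$ because the sets $\{i : (a_i,b_i)=(x,y)\}$ partition $[n]$. By construction the empirical frequency $\rho(x,y)$ of the pair $(x,y)$ in $(\mathbf{a},\mathbf{b})$ equals $\eta_{xy}$ for every pair, so $(\mathbf{a},\mathbf{b})$ is typical for $\eta^{\otimes n}$ in exactly the sense defined in the proposition (and the required integrality $n\,\eta_{xy}\in\Z$ holds automatically).

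There is essentially no real obstacle here; the only points that need a little care are (i) keeping the strict inequality for the lower quantile and the non-strict one for the upper quantile aligned between the vector definition in \Cref{defn:upper_lower_qunatiles} and the distributional definitions, and (ii) stating explicitly that the two claimed identities hold for all $r \in \R$ and not merely integer $r$, since that is the form in which later arguments will invoke this proposition.
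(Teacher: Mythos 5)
Your proof is correct and follows essentially the same route as the paper's: both establish the quantile identities by regrouping the indicator sum according to the partition of $[n]$ by the value of $(a_i,b_i)$ and substituting the typicality condition, and both witness the existence claim with the empirical distribution $\eta_{xy}=N(x,y)/n$ (which the paper phrases as sampling $i\in[n]$ uniformly and outputting $(a_i,b_i)$). Your added remarks about strict versus non-strict inequalities and about extending to all real $r$ are harmless elaborations of the same argument.
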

\begin{proof}
    Suppose  $\mathbf{a,b}$ is a typical sample from $\eta^{\otimes n}$. Let $\mathbf{v = a-b}$. Then, 
    \begin{align*}
        q_{\ell}(r,\mathbf{v}) = \frac{\abs{\rset{i\in [n]\mid \mathbf{v}_i < r}}}{n} 
        = \sum_{x \in \POk, y \in \POk} \mathbf{1} \{x - y < r\} \rho_{xy} 
        & = \sum_{x \in \POk, y \in \POk} \mathbf{1} \{x - y < r\} \eta_{xy} \\
        & = q_{\ell}(r,\eta).
    \end{align*}
    A similar argument holds for $q_u(r,\eta) = q_u(r,\mathbf{a}-\mathbf{b})$. Next, given any $(\mathbf{a,b}) \in \POk^n\times \POk^n$ we associate a joint distribution $\eta$ over $\POk\times\POk$ as follows: sample $i\in[n]$ uniformly at random and output $(a_i,b_i)$.
    It is easy to see that $(\mathbf{a,b})$ is typical for $\eta^{\otimes n}$. 
\end{proof}

Next, we define the concept of variability which captures how much the quantiles of a given estimate $r$ can vary across all joint distributions that explain the given marginals.
\begin{definition}[Variability]
\label{defn:variability}  
Variability is a function $\nu: \mathbb{R} \times \Delta_k \times \Delta_k \rightarrow[0, 1] \times [0, 1]$, that takes as input an estimate $r$ and marginal distributions for the potential outcomes $\eta_a, \eta_b$, and returns a tuple as follows:
\[\nu(r, \eta_a, \eta_b) \eqtriangle{} (\nu_l(r, \eta_a, \eta_b) , \nu_u(r, \eta_a, \eta_b)), \text{ where:} \]
\begin{align*}
     \nu_l(r, \eta_a, \eta_b) &= \max_{\eta \in \mathcal{J}(\eta_a, \eta_b)} q_l(r,\eta)\\
     \nu_u(r, \eta_a, \eta_b) &= \min_{\eta \in \mathcal{J}(\eta_a, \eta_b)} q_u(r,\eta). 
\end{align*}
\end{definition}

Next, we understand that given the variability of an estimate $r$, how far can it be from the median in a quantile sense.

\begin{definition}[Minimum Median Width of Marginals]
\label{defn:median_width}
    Let $\eta_a$ and $\eta_b$ be distributions over $\POk$. Further, for any $r\in \R$ the \emph{width-of-$r$} for the pair $(\eta_a,\eta_b)$ is defined as 
\[ \varepsilon(r,\eta_a,\eta_b) := \max \{ \left( \nu_l(r, \eta_a, \eta_b) - 1/2\right)^+, (1/2 -  \nu_u(r, \eta_a, \eta_b))^+\}.\]

Then, the \emph{minimum median width} of the pair $(\eta_a,\eta_b)$, denoted by $\epsilon^*(\eta_a,\eta_b)$, is defined as  
\[\min_{r \in \{-(k-1), \cdots, (k-1)\} }\varepsilon(r,\eta_a,\eta_b).\]
\end{definition}
\begin{remark}
    Notice that for any sample $(\mathbf{a},\mathbf{b})$ from $\eta^{\otimes n}$ such that the marginals of $\eta$ are $\eta_a$ and $\eta_b$ we have that $r \in (\mathbf{a-b})^{0.5+\epsilon}_{0.5-\epsilon}$ where $\epsilon = \varepsilon(r,\eta_a,\eta_b)$.
\end{remark}

\subsection{Formalizing the Algorithm}

Next, we formalize the meaning of an algorithm that respects the inference constraints of observing either $a_i$ or $b_i$ but not both. At the end of observing the vectors $a$ and $b$ in a random manner (even in an adaptive manner), it outputs a member $m$ and a parameter $\epsilon \in [0,0.5]$ and the output is correct if $m \in \uplowquant{\mathbf{a-b}}{0.5-\epsilon}{0.5+\epsilon} $.

We first formalize the case of a deterministic algorithm, viewed as a decision tree. 

\begin{definition}[Feasible Decision Tree (FDT)]
\label{defn:feasible_decision_tree}
A \emph{feasible decision tree} (FDT) for block-length $n$ is a rooted and labeled $k$-ary tree of depth $n$. 
Each non-leaf node is labeled with some element from the set $\rset{a,b}\times [n]$, and each leaf node is labeled with an element from $\R \times [0,0.5]$. The edges of the tree are labeled with elements from $\POk$ such that from any non-leaf node the set of labels on the edges to its children is $\POk$. Finally, the tree should also satisfy the following \emph{feasibility constraint}. For any path $P$ from the root to some leaf and for all $i \in [n]$:

\[ \abs{\rset{\lab(u)\mid u\in P\text{ and is a non-leaf node} } \cap (\rset{a,b} \times \rset{i})} =1.\]
Here, $\lab(u)$ refers to the label of a vertex $u$.
\end{definition}

\begin{definition}[Output of an FDT]
Let $A_n$ be an FDT for block-length $n$.
On input $(\mathbf{a,b})\in \POk^n \times \POk^n$ the output of $A_n$, denoted as $A_n(\mathbf{a,b})$, is the label of the unique leaf obtained by following the path from the root to the leaf, where at each non-leaf node $u$, labeled $(c,i)$, we follow the edge from $u$ labeled with the same value as $c_i$: for instance, if $c=a$ and $a_i=0$, then we follow the edge from $u$ which is labeled $0$. So, the output is a tuple $(m,\epsilon) \in \R \times [0,0.5]$. 
\end{definition}

Next, we define the case of a randomized algorithm.

\begin{definition}[Randomized FDT (RFDT)]
\label{defn:randomized_fdt}
A randomized feasible decision tree (RFDT) for block-length $n$ is a distribution over the set of all FDTs for block-length $n$.
\end{definition}

\begin{remark}
\Cref{defn:randomized_fdt} is equivalent to the description of RFDTs given in \cref{subsection:median_estimable}. 
To see this, we can imagine pre-sampling all the internal randomness as required in the description of an RFDT in \Cref{subsection:median_estimable}. In particular, for all non-leaf nodes $u$ of the RFDT, we fix a sample according to $\gamma_u$ before proceeding along the RFDT. Given the samples from $\gamma_u$ for each $u$, the RFDT is now actually an FDT. Hence, overall the RFDT is a distribution over FDTs.
\end{remark}

\begin{definition}[Output of an RFDT]
    Let $R_n$ be an RFDT for block-length $n$ and let $\mu_n$ be its distribution over the set of FDTs. 
    On input $(\mathbf{a,b})\in \POk^n \times \POk^n$ the output of $R_n$, denoted as $R_n(\mathbf{a,b})$, is the random variable obtained in the following manner. Sample an FDT $A_n$ according to $\mu_n$ and output $A_n(\mathbf{a,b})$.
\end{definition}

Next, we define the notions of width and error of an RFDT.

\begin{definition}[Error and Width of an RFDT]\label{defn:width_rfdt}
Fix $\delta \in [0,1]$. An RFDT,$R_n$, for block-length $n$, is said to be a median approximator with error $\delta$ or a $\delta$-error median approximator, if for all $(\mathbf{a},\mathbf{b})\in \POk^n\times\POk^n$:
\[
\Pr\limits_{(m,\epsilon)\leftarrow R_n(\mathbf{a,b})}\left[m \in \uplowquant{a-b}{0.5-\epsilon}{0.5+\epsilon}\right]\geq 1 - \delta.
\]
Further, the width of $R_n$ on input $(\mathbf{a},\mathbf{b})$, denoted as $\epsilon^{R_n}(\mathbf{a},\mathbf{b})$, is defined as $\E\limits_{(m,\epsilon)\leftarrow R_n(\mathbf{a,b})}[\epsilon]$.
\end{definition}
%%%%%%%%%%%%%%%%%%%%%%%%%%%%%%%%%%%%%%%

%%%%%%%%% Variability %%%%%%%%%%%%%%%%%
\section{Variability \& Minimum Median Width}\label{sec:variability}

We first prove \Cref{lem:indistinguishability_of_joints} which roughly states that any RFDT is incapable of distinguishing between two joint distributions over $\POk\times \POk$, say $\eta$ and $\tau$, which have the same marginals, i.e., $\eta_a=\tau_a$ and $\eta_b=\tau_b$.
Specifically, any RFDT $R_n$ is unable to distinguish whether the input is drawn from the distribution $\eta^{\otimes n}$ or $\tau^{\otimes n}$.

The proof proceeds by noting that any RFDT $R_n$ is a distribution over FDTs. Further, for any FDT $A_n$ and for any path $P$ in $A_n$ the set of inputs $(\mathbf{a},\mathbf{b})$ where $A_n$ follows the path $P$, denoted by $F(A_n,P)$, is described by constraints on exactly one of $a_i$ or $b_i$ for $i\in [n]$. Hence, $\Pr_{(\mathbf{a},\mathbf{b})\leftarrow \eta^{\otimes n}}[(\mathbf{a},\mathbf{b}) \in F(A_n,P)]$ is only a function of the marginals $\eta_a,\eta_b$. Thus, the probability of $F(A_n,P)$ remains the same under $\tau$.

\begin{lemma}
\label{lem:indistinguishability_of_joints}
Let $\eta, \tau$ be joint distributions over $\POk \times \POk$ with same marginals for $a, b$, i.e., $\eta_a=\tau_a$ and $\eta_b=\tau_b$, and let $R_n$ be an RFDT for block-length $n$. 
Further, let $R_n(\eta), R_n(\tau)$ be the distributions of the output of $R_n$ when the input $(\mathbf{a,b})$ is sampled according to $\eta^{\otimes n}$ and $\tau^{\otimes n}$ respectively. Then, $R_n(\eta)=R_n(\tau)$. More precisely, for every $(m',\epsilon')\in \R \times [0,0.5]$ we have 
\[
\Pr\limits_{\substack{(\mathbf{a},\mathbf{b})\leftarrow \eta^{\otimes n}\\(m,\epsilon)\leftarrow R_n(a,b)}}[(m,\epsilon)=(m',\epsilon')]  = \Pr\limits_{\substack{(\mathbf{a},\mathbf{b})\leftarrow \tau^{\otimes n}\\(m,\epsilon)\leftarrow R_n(a,b)}}[(m,\epsilon)=(m',\epsilon')].
\]
\end{lemma}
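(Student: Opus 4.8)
The plan is to reduce the statement about the randomized decision tree $R_n$ to a deterministic statement about a single FDT $A_n$, and then, for each FDT, decompose the output probability over the (disjoint) root-to-leaf paths. Since $R_n$ is by definition a distribution $\mu_n$ over FDTs, and the output $R_n(\mathbf{a,b})$ is obtained by sampling $A_n\sim\mu_n$ and returning $A_n(\mathbf{a,b})$, it suffices to show that for every fixed FDT $A_n$ and every target output $(m',\epsilon')$, the quantity $\Pr_{(\mathbf{a},\mathbf{b})\leftarrow\eta^{\otimes n}}[A_n(\mathbf{a,b})=(m',\epsilon')]$ is a function of the marginals $\eta_a,\eta_b$ only; averaging over $A_n\sim\mu_n$ then gives the lemma. (One should note at the start that $\eta^{\otimes n}$ and $\tau^{\otimes n}$ are over the same space $\POk^n\times\POk^n$, so these probabilities are literally comparable.)

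Next, fix an FDT $A_n$. For a root-to-leaf path $P$, let $F(A_n,P)\subseteq\POk^n\times\POk^n$ be the set of inputs on which $A_n$ traverses $P$; these sets are disjoint over the (finitely many) paths $P$ and cover all inputs, and the output $A_n(\mathbf{a,b})$ is constant on each $F(A_n,P)$ (namely the label of $P$'s leaf). Hence
\[
\Pr_{(\mathbf{a},\mathbf{b})\leftarrow\eta^{\otimes n}}[A_n(\mathbf{a,b})=(m',\epsilon')]=\sum_{P:\,\lab(\text{leaf}(P))=(m',\epsilon')}\Pr_{(\mathbf{a},\mathbf{b})\leftarrow\eta^{\otimes n}}[(\mathbf{a},\mathbf{b})\in F(A_n,P)].
\]
So it is enough to show each $\Pr_{\eta^{\otimes n}}[(\mathbf{a},\mathbf{b})\in F(A_n,P)]$ depends only on $\eta_a,\eta_b$.

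The key structural point is the form of $F(A_n,P)$. Walking down $P$, at each non-leaf node $u$ with label $\lab(u)=(c,i)\in\{a,b\}\times[n]$ the path commits to a specific value $v\in\POk$ of the coordinate indexed by $(c,i)$ (the edge label). The feasibility constraint (Definition~\ref{defn:feasible_decision_tree}) says that for each $i\in[n]$ exactly one of the pairs $(a,i),(b,i)$ appears as a node label along $P$. Therefore $F(A_n,P)=\{(\mathbf{a},\mathbf{b}):c^{(i)}_i=v^{(i)}\ \forall i\in[n]\}$ where for each $i$, $(c^{(i)},v^{(i)})$ is the unique constraint on coordinate $i$ read off $P$ — that is, $F(A_n,P)$ constrains, for every individual $i$, exactly one of $a_i$ or $b_i$ to a fixed value, and leaves the other entirely free. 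Under $\eta^{\otimes n}$ the pairs $(a_i,b_i)$ are independent across $i$, so
\[
\Pr_{(\mathbf{a},\mathbf{b})\leftarrow\eta^{\otimes n}}[(\mathbf{a},\mathbf{b})\in F(A_n,P)]=\prod_{i=1}^{n}\Pr_{(a_i,b_i)\sim\eta}[c^{(i)}_i=v^{(i)}]=\prod_{i:\,c^{(i)}=a}\eta_a(v^{(i)})\cdot\prod_{i:\,c^{(i)}=b}\eta_b(v^{(i)}),
\]
using that $\Pr_{\eta}[a_i=v]=\eta_a(v)$ and $\Pr_{\eta}[b_i=v]=\eta_b(v)$ by definition of the marginals. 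This last expression involves only $\eta_a$ and $\eta_b$, so it is unchanged when $\eta$ is replaced by $\tau$ with $\tau_a=\eta_a,\tau_b=\eta_b$. Summing over the relevant paths $P$ and then integrating over $A_n\sim\mu_n$ yields the claimed equality of the two output probabilities, proving $R_n(\eta)=R_n(\tau)$.

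The only real subtlety — and the step I would be most careful writing up — is justifying that $F(A_n,P)$ genuinely constrains exactly one coordinate per individual, i.e. correctly invoking the feasibility constraint and checking that each individual's other coordinate is totally unconstrained by $P$ (it is, since the tree only branches on coordinates named by node labels, and for a fixed path $P$ the constrained coordinate/value pair is uniquely determined). Everything else is bookkeeping: the disjointness/covering of the path-sets, constancy of the output on each path-set, and the product structure of $\eta^{\otimes n}$. No concentration or "typicality" argument is needed here — the statement is an exact identity, not an approximate one.
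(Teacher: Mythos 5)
Your proposal is correct and follows essentially the same route as the paper's proof: fix an FDT in the support of $R_n$, decompose over root-to-leaf paths, use the feasibility constraint to write each path-event as fixing exactly one of $a_i$ or $b_i$ per individual, and factor the probability into a product of marginals $\eta_a,\eta_b$ only. The paper phrases the per-path event via a function $f:[n]\to\POk$ and a subset $S\subseteq[n]$, which is the same bookkeeping you describe.
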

\begin{proof}
    Fix some $(m',\epsilon')$. Let $\supp(R_n)$ denote the support of the distribution of $R_n$, i.e., the support of the distribution over FDTs of block-length $n$ which represents $R_n$. 
    Fix any FDT $A_n\in \supp(R_n)$ and any path from the root to leaf in $A_n$, say $P$, such that the label of the leaf node is $(m',\epsilon')$. 
    Further, let $F(A_n,P)$ denote those inputs, $(\mathbf{a},\mathbf{b}) \in\POk^n\times \POk^n$ such that $A_n$ follows the path $P$ on the input $(\mathbf{a},\mathbf{b})$.
    We will show that 
    \[
\Pr\limits_{\mathbf{a},\mathbf{b}\leftarrow \eta^{\otimes n}}[F(A_n,P)] =     \Pr\limits_{\mathbf{a},\mathbf{b}\leftarrow \tau^{\otimes n}}[F(A_n,P)].
    \]
    To see this, note that there is a function $f:[n]\to \POk$ and a subset $S\subseteq [n]$ such that $F(A_n,P) = \rset{(\mathbf{a},\mathbf{b})\mid a_i=f(i) \text{ for } i\in S, b_i = f(i) \text{ for } i \notin S}$. This follows from the feasibility constraint on $A_n$ as mentioned in \Cref{defn:feasible_decision_tree}.
    Hence, we have 
    \begin{align*}
      \Pr\limits_{\mathbf{a},\mathbf{b}\leftarrow \eta^{\otimes n}}[F(A_n,P)] &= 
      \prod_{i\in S}\eta_a(f(i)) \times \prod_{i \notin S} \eta_b(f(i))\\
      &= \prod_{i\in S}\tau_a(f(i)) \times \prod_{i \notin S} \tau_b(f(i))=\Pr\limits_{\mathbf{a},\mathbf{b}\leftarrow \tau^{\otimes n}}[F(A_n,P)].
    \end{align*}
    Taking a union over all paths $P$ in $A_n$ whose leaves are labelled with $(m',\epsilon')$ it follows that
        \[
    \Pr\limits_{\mathbf{a},\mathbf{b}\leftarrow \eta^{\otimes n}}[A_n((\mathbf{a},\mathbf{b}))=(m',\epsilon')] =     
     \Pr\limits_{\mathbf{a},\mathbf{b}\leftarrow \tau^{\otimes n}}[A_n((\mathbf{a},\mathbf{b}))=(m',\epsilon')].
    \]
    Since the above statement is true for every fixed $A_n$ in the support of $R_n$ the statement of the lemma follows.
\end{proof}

Using \Cref{lem:indistinguishability_of_joints} we prove the theorem below which serves to capture the hardness of approximation the MTE in terms of the variability of the marginal distributions of the treatment and control vector. As a consequence, if we can devise an algorithm for computing $\epsilon^*(\eta_a, \eta_b)$ exactly (section~\Cref{sec:algorithm}), then, then our algorithm is essentially optimal.

The proof idea is as follows. Fix a $\delta$-error median approximator $R_n$. Given a joint distribution $\eta$ which has marginals $\eta_a$ and $\eta_b$, for an estimate $r\in \{-(k-1),\ldots,k-1\}$, let $\gamma_r$ denote the probability that $R_n$ outputs $m=r$ and $\epsilon<\epsilon(r,\eta_a,\eta_b)$.
Now, consider the joint distribution $\eta^{(r)}$ which witnesses the width-of-$r$ value, i.e.,such that at least one of $\epsilon(r) = \left( q_\ell(r, \eta^{(r)}) - 1/2\right)^+$ or $\epsilon(r) =(1/2 -  q_u(r, \eta^{(r)}))^+$ is true. For $\eta^{(r)}$ the probability that $R_n$ outputs $m=r$ and $\epsilon<\epsilon(r,\eta_a,\eta_b)$ is still $\gamma_r$. However if $\epsilon<\epsilon(r,\eta_a,\eta_b)$, then, for a typical sample from $\left(\eta^{(r)}\right)^{\otimes n}$ it will be the case that $r \notin (\mathbf{a-b})^{0.5+\epsilon}_{0.5-\epsilon}$, and this counts as an error. Hence, $\gamma_r$ is not too much larger than $\delta$. Overall, taking a union over all values for $r$, we get that except for around a $2k\delta$ probability the output of $R_n$ has to be at least $\epsilon^{*}(\eta_a,\eta_b)$.

\begin{theorem}[Lower Bound on Width of any RFDT]
\label{thm:lower_bd_width}
    Let $\eta$ be any joint distribution on $\POk\times\POk$ with marginals $\eta_a$ and $\eta_b$ for the treatment and control respectively. 
    Further, let $R_n$ be a $\delta$-error median approximator for block-length $n$. Then, for any $\beta>0$ we have
    \[
    \E\limits_{(\mathbf{a},\mathbf{b})\leftarrow \eta^{\otimes n}}[\epsilon^{R_n}(\mathbf{a},\mathbf{b})]\geq (\epsilon^{*}(\eta_a,\eta_b)-\beta)\cdot (1-2k\delta) + 2k\exp(-2\beta^2\cdot n). 
    \]
\end{theorem}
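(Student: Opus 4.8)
The plan is to fix the $\delta$-error median approximator $R_n$ and, for each candidate estimate $r \in \{-(k-1),\ldots,k-1\}$, control the probability that $R_n$ outputs the pair $(r,\epsilon)$ with $\epsilon$ strictly smaller than the width-of-$r$ value $\varepsilon(r,\eta_a,\eta_b)$ minus the slack $\beta$. Call this probability $\gamma_r$ (for the fixed input distribution $\eta^{\otimes n}$). The heart of the argument is that $\gamma_r$ cannot be much larger than $\delta$: because $R_n$ sees only the responses, \Cref{lem:indistinguishability_of_joints} tells us that $\gamma_r$ is unchanged if we instead feed $R_n$ a sample from $(\eta^{(r)})^{\otimes n}$, where $\eta^{(r)} \in \mathcal{J}(\eta_a,\eta_b)$ is the joint distribution witnessing the width-of-$r$, i.e.\ achieving either $q_\ell(r,\eta^{(r)}) = \nu_l(r,\eta_a,\eta_b)$ or $q_u(r,\eta^{(r)}) = \nu_u(r,\eta_a,\eta_b)$ (whichever defines $\varepsilon(r,\eta_a,\eta_b)$).

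Next I would pass from the distribution $\eta^{(r)}$ to its typical samples. By \Cref{prop:joint_dist_equiv_of_a,b}, on a typical sample $(\mathbf{a,b})$ from $(\eta^{(r)})^{\otimes n}$ we have $q_\ell(r,\mathbf{a}-\mathbf{b}) = q_\ell(r,\eta^{(r)})$ and $q_u(r,\mathbf{a}-\mathbf{b}) = q_u(r,\eta^{(r)})$, so the rank of $r$ in $\mathbf{a}-\mathbf{b}$ is pinned down and deviates from $1/2$ by exactly $\varepsilon(r,\eta_a,\eta_b)$; consequently, whenever $R_n$ outputs $(r,\epsilon)$ with $\epsilon < \varepsilon(r,\eta_a,\eta_b)$ on such a sample, the output is \emph{incorrect}. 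A standard concentration bound (Hoeffding/Chernoff applied to the indicator counts defining the empirical quantiles) shows that a non-typical sample — more precisely, a sample whose relevant quantile deviates from its expectation by more than $\beta$ — occurs with probability at most $2\exp(-2\beta^2 n)$; this is the quantitative replacement for ``typical.'' Combining, the probability of an incorrect output on $(\eta^{(r)})^{\otimes n}$ is at least $\gamma_r - 2\exp(-2\beta^2 n)$, and since $R_n$ has error at most $\delta$ on every input distribution, we get $\gamma_r \leq \delta + 2\exp(-2\beta^2 n)$.

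Then I would assemble the bound. Let $\mathcal{E}$ be the event that $R_n$ on $\eta^{\otimes n}$ outputs some pair $(m,\epsilon)$ with $\epsilon < \epsilon^*(\eta_a,\eta_b) - \beta$. Since $\epsilon^*(\eta_a,\eta_b) = \min_r \varepsilon(r,\eta_a,\eta_b)$, on the event $\mathcal{E}$ the output estimate $m = r$ must satisfy $\epsilon < \varepsilon(r,\eta_a,\eta_b) - \beta < \varepsilon(r,\eta_a,\eta_b)$, so $\mathcal{E} \subseteq \bigcup_{r} \{m = r,\ \epsilon < \varepsilon(r,\eta_a,\eta_b)\}$, and a union bound over the $2k-1$ (bounded by $2k$) values of $r$ gives $\Pr[\mathcal{E}] \leq 2k\delta + 2k\exp(-2\beta^2 n)$, i.e.\ $\Pr[\epsilon \geq \epsilon^*(\eta_a,\eta_b) - \beta] \geq 1 - 2k\delta - 2k\exp(-2\beta^2 n)$. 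Finally, $\E[\epsilon^{R_n}(\mathbf{a,b})] = \E[\epsilon] \geq (\epsilon^*(\eta_a,\eta_b) - \beta)\cdot \Pr[\epsilon \geq \epsilon^*(\eta_a,\eta_b)-\beta] \geq (\epsilon^*(\eta_a,\eta_b)-\beta)(1 - 2k\delta - 2k\exp(-2\beta^2 n))$; since $\epsilon^*(\eta_a,\eta_b) - \beta \leq 1$, one can absorb the $-(\epsilon^*-\beta)\cdot 2k\exp(-2\beta^2 n)$ term into an additive $+2k\exp(-2\beta^2 n)$ correction (with the correct sign it only helps), yielding the stated inequality.

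The main obstacle I anticipate is the concentration step: making ``typical'' quantitative in a way that correctly interacts with the $\beta$ slack. One has to be careful that the quantity being concentrated is the empirical count $\frac1n|\{i : a_i - b_i < r\}|$ (resp.\ $\leq r$), which is a sum of i.i.d.\ indicators under $(\eta^{(r)})^{\otimes n}$, so Hoeffding applies directly and cleanly; and one must track which of the two terms in the definition of $\varepsilon(r,\eta_a,\eta_b)$ is active, since the ``error'' direction differs for the lower-quantile case ($\nu_l - 1/2 > 0$, so $r$ sits too far to the right) versus the upper-quantile case ($1/2 - \nu_u > 0$, so $r$ sits too far to the left). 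Both cases are symmetric, so handling one in detail and remarking the other suffices. The union bound over $r$ and the final bookkeeping to match the exact constants ($2k$ rather than $2k-1$, the sign of the $\exp$ term) are routine once this is in place.
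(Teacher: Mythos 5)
Your proposal is correct and follows essentially the same route as the paper's proof: define $\gamma_r$ as the probability of outputting $(r,\epsilon)$ with $\epsilon<\varepsilon(r,\eta_a,\eta_b)-\beta$, transfer to the witnessing distribution $(\eta^{(r)})^{\otimes n}$ via \Cref{lem:indistinguishability_of_joints}, bound $\gamma_r\leq\delta+O(\exp(-2\beta^2 n))$ by Hoeffding on the empirical quantile, and union-bound over the $2k-1$ values of $r$. Your closing remark about the sign of the $2k\exp(-2\beta^2 n)$ term is well taken --- the argument yields a $-2k\exp(-2\beta^2 n)$ correction, and the ``$+$'' in the stated inequality appears to be a typo in the paper.
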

\begin{proof}
    For $r \in \rset{-(k-1),\ldots,k-1}$ recall that (see width-of-$r$ in \Cref{defn:median_width})
    \[ \varepsilon(r,\eta_a,\eta_b) = \max \{ \left( \nu_\ell(r, \eta_a, \eta_b) - 1/2\right)^+, (1/2 -  \nu_u(r, \eta_a, \eta_b))^+\},\]
    where $\nu_\ell$ and $\nu_u$ are as defined in \Cref{defn:variability}.
    For the remainder of the proof we will use $\epsilon(r)$ and $\epsilon^*$ to denote $\epsilon(r,\eta_a,\eta_b)$ and $\epsilon^*(\eta_a,\eta_b)$ respectively.
    Note that $\epsilon^* = \min_r \epsilon(r)$. 
    
    Further, let $\eta^{(r)}$ denote a joint distribution over $\POk\times \POk$ with marginals $\eta_a,\eta_b$ such that at least one of $\epsilon(r) = \left( q_\ell(r, \eta^{(r)}) - 1/2\right)^+$ or $\epsilon(r) =(1/2 -  q_u(r, \eta^{(r)}))^+$ is true. 
    Now, let 
    \[\gamma_r = \Pr\limits_{\substack{(\mathbf{a},\mathbf{b})\leftarrow \eta^{\otimes n}\\ (m,\epsilon)\leftarrow R_n(\mathbf{a},\mathbf{b})}}[m=r,\epsilon<\epsilon(r)-\beta].\] 
    Notice that by \Cref{lem:indistinguishability_of_joints} we have 
    \[\gamma_r = \Pr\limits_{\substack{(\mathbf{a},\mathbf{b})\leftarrow \left(\eta^{(r)}\right)^{\otimes n}\\ (m,\epsilon)\leftarrow R_n(\mathbf{a},\mathbf{b})}}[m=r,\epsilon<\epsilon(r)-\beta],\]
    i.e., the value of $\gamma_r$ remains unchanged when inputs are sampled from $\left(\eta^{(r)}\right)^{\otimes n}$ instead of $ \eta^{\otimes n}$.

    Now, for an input $(\mathbf{a},\mathbf{b})$ sampled according to $\left(\eta^{(r)}\right)^{\otimes n}$ we have with at least $1-\exp(-2\beta^2\cdot n)$ probability that $r\notin(\mathbf{a}-\mathbf{b})^{0.5+\epsilon(r)-\beta}_{0.5-\epsilon(r)+\beta}$. 
    This follows from the Chernoff bound. Suppose  $\epsilon(r) = \left( q_\ell(r, \eta^{(r)}) - 1/2\right)$, and note that with probability at least  $1-\exp(-2\beta^2\cdot n)$ the empirical frequency of $\ITE{}$'s less than $r$ will be at least $q_\ell(r, \eta^{(r)}) -\beta$, and hence $q_\ell(r,\mathbf{a-b})\geq q_\ell(r, \eta^{(r)}) -\beta$ with probability at least $1-\exp(-2\beta^2\cdot n)$.
    
    Notice that this yields $\gamma_r\leq \delta + \exp(-2\beta^2\cdot n)$ as
    \begin{align*}
        \gamma_r &= \Pr\limits_{\substack{(\mathbf{a},\mathbf{b})\leftarrow \left(\eta^{(r)}\right)^{\otimes n}\\ (m,\epsilon)\leftarrow R_n(\mathbf{a},\mathbf{b})}}[m=r,\epsilon<\epsilon(r)-\beta]\\
        &\leq \Pr\limits_{\substack{(\mathbf{a},\mathbf{b})\leftarrow \left(\eta^{(r)}\right)^{\otimes n}\\ (m,\epsilon)\leftarrow R_n(\mathbf{a},\mathbf{b})}}[m=r,\epsilon<\epsilon(r)-\beta, r\notin(\mathbf{a}-\mathbf{b})^{0.5+\epsilon(r)-\beta}_{0.5-\epsilon(r)+\beta}] + \exp(-2\beta^2\cdot n)\\
        &\leq \Pr\limits_{\substack{(\mathbf{a},\mathbf{b})\leftarrow \left(\eta^{(r)}\right)^{\otimes n}\\ (m,\epsilon)\leftarrow R_n(\mathbf{a},\mathbf{b})}}[m\notin(\mathbf{a}-\mathbf{b})^{0.5+\epsilon}_{0.5-\epsilon}] + \exp(-2\beta^2\cdot n)\leq \delta+\exp(-2\beta^2\cdot n).
    \end{align*}

    Thus, 
   \begin{align*}
    \E\limits_{(\mathbf{a},\mathbf{b})\leftarrow \eta^{\otimes n}}[\epsilon^{R_n}(\mathbf{a},\mathbf{b})]&\geq  (\epsilon^*-\beta)\times \Pr\limits_{\substack{(\mathbf{a},\mathbf{b})\leftarrow \eta^{\otimes n}\\ (m,\epsilon)\leftarrow R_n(\mathbf{a},\mathbf{b})}}[\epsilon\geq \epsilon^*-\beta],\\
    \intertext{ which by a union bound over all $r\in \ITE{}$ gives}\\
    &\geq (\epsilon^* -\beta)\times\left(1-\sum_{r\in \rset{-(k-1),\ldots,k-1}}\gamma_r\right)\\ 
    &\geq (\epsilon^{*}(\eta_a,\eta_b)-\beta)\cdot (1-2k\delta) + 2k\exp(-2\beta^2\cdot n).
    \end{align*}
\end{proof}

\subsection{Bounds on Variability}\label{subsec:variability_bounds}

In this section, we show that the value of $\epsilon^*(\eta_a, \eta_b)$ is upper bounded by $\frac{1}{2} \cdot \frac{(2k-3)}{(2k-1)}$. In doing so, we have essentially shown that any algorithm that computes the variability exactly obtains an error that is upper bounded by $\frac{1}{2} \cdot \frac{(2k-3)}{(2k-1)}$.  

\begin{lemma}[Upper Bound on Minimum Median Width]
\label{lem:upper_bd_min_median_width}
    Given two distributions $\eta_a$ and $\eta_b$ over $\POk$, we have $\epsilon^*(\eta_a, \eta_b) \le \frac{1}{2} \cdot \frac{(2k-3)}{(2k-1)}$.
\end{lemma}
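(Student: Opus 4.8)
The plan is to exhibit, for \emph{any} pair of marginals $\eta_a, \eta_b$, a single estimate $r \in \{-(k-1),\ldots,k-1\}$ whose width-of-$r$ is at most $\psi_k := \tfrac12\cdot\tfrac{2k-3}{2k-1}$; since $\epsilon^*(\eta_a,\eta_b) = \min_r \varepsilon(r,\eta_a,\eta_b)$, this suffices. The natural candidate for $r$ is the difference of ``balanced medians'' of $\eta_a$ and $\eta_b$: roughly, pick $x^*$ so that $\eta_a$ puts comparable mass on $\{<x^*\}$ and $\{>x^*\}$, pick $y^*$ similarly for $\eta_b$, and set $r = x^* - y^*$. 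The intuition is that when $r = x^* - y^*$, even an adversary choosing the joint $\eta \in \mathcal J(\eta_a,\eta_b)$ is constrained: to push $q_\ell(r,\eta)$ (mass with $a-b<r$) large, it must pair up small-$a$ values with large-$b$ values, but the marginals cap how much such mass is available.

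First I would control $\nu_\ell(r,\eta_a,\eta_b) = \max_\eta q_\ell(r,\eta)$ from above. The event $a-b < r = x^*-y^*$ is implied by (but here I want an upper bound, so I argue the reverse) ... concretely, $\{a - b \le r-1\}$ decomposes, and the adversary's best strategy is a greedy coupling pairing the smallest $a$-values against the largest $b$-values. I would show that under \emph{any} coupling, $q_\ell(r,\eta) \le \min\{\text{(mass of }a \le x^*-1\text{ roughly)}, \ldots\} + (\text{leftover})$, and the balanced choice of $x^*,y^*$ forces both $q_\ell(r,\eta) - \tfrac12$ and $\tfrac12 - q_u(r,\eta)$ to be small. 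The cleanest route is probably a charging/exchange argument: sort the $a$-coordinates and $b$-coordinates, and observe that the maximum of $q_\ell$ over couplings is achieved by the ``antitone'' coupling (pair rank-$i$ smallest $a$ with rank-$i$ largest $b$), so $\nu_\ell$ has a closed form in terms of the CDFs $F_{\eta_a}, F_{\eta_b}$; symmetrically $\nu_u$ is achieved by the ``monotone'' coupling.

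With these closed forms, the problem reduces to a finite optimization: over all CDFs on $\{0,\ldots,k-1\}$ and all valid choices of the pivot $r$, maximize $\varepsilon(r,\eta_a,\eta_b)$ for the \emph{best} $r$ — i.e., bound $\min_r \max\{(\nu_\ell(r)-\tfrac12)^+, (\tfrac12-\nu_u(r))^+\}$. I expect the extremal instance to be the one flagged in the paper: $\eta_a, \eta_b$ each supported on two adjacent atoms with carefully chosen weights (for $k=2$ this is the $(1/3,2/3)$ / $(2/3,1/3)$ pair giving $1/6$), and the $\tfrac{2k-3}{2k-1}$ shape should fall out of balancing $k$ available pivot choices against the $(2k-1)$-fold ``spread'' of possible ITE values. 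The main obstacle will be the combinatorial bookkeeping in the exchange argument — showing rigorously that the antitone coupling maximizes $q_\ell$ uniformly in $r$ — and then the case analysis to verify that \emph{some} pivot $r$ always lands within $\psi_k$ of the median regardless of how adversarially $F_{\eta_a}, F_{\eta_b}$ are shaped; I would organize the latter by letting $r^\star$ be the pivot minimizing $|\nu_\ell(r) + \nu_u(r) - 1|$ (or a median-of-the-induced-distribution argument) and pigeonholing on the $2k-1$ possible ITE values.
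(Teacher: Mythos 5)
Your overall strategy (exhibit one specific pivot $r$ and bound its width directly) differs from the paper's, but the structural lemma you lean on is false, and the case analysis you defer---which is where the constant actually has to come from---is not carried out. Concretely, the claim that the antitone coupling maximizes $q_\ell(r,\cdot)$ \emph{uniformly in $r$} already fails for $k=2$: take $\eta_a=(1/3,2/3)$, $\eta_b=(2/3,1/3)$ and $r=1$. Then $q_\ell(1,\eta)=\Pr_\eta[X-Y\le 0]=1-\eta_{10}$, and the marginal constraints force $\eta_{10}\ge \eta_a(1)-\eta_b(1)=1/3$, so $\nu_\ell(1,\eta_a,\eta_b)=2/3$, attained by the \emph{monotone} coupling; the antitone coupling has $\eta_{10}=2/3$ and yields only $1/3$. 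The optimal coupling genuinely depends on $r$, which is precisely why the paper computes $\nu_\ell,\nu_u$ via an $r$-dependent greedy/LP rather than a fixed rearrangement. The error is not harmless: substituting the antitone value would give $\varepsilon(1,\eta_a,\eta_b)=0$ for this instance, whereas this is exactly the paper's extremal instance with $\epsilon^*=1/6$, so your route would ``prove'' a bound that \Cref{thm:lower_bound_minimum_median_width} shows is unattainable. Even with the correct ($r$-dependent, Makarov/Fr\'echet-type) expressions for $\nu_\ell$ and $\nu_u$, the pigeonhole argument that some pivot always achieves width at most $\frac12\cdot\frac{2k-3}{2k-1}$ is exactly the content of the lemma, and you give no argument for it.

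For comparison, the paper's proof avoids all coupling analysis. It assumes every $r$ has width exceeding $\frac12\cdot\frac{2k-3}{2k-1}$ and deduces that some $r$ admits one consistent joint distribution putting almost all mass on $\{X-Y\ge r\}$ and another putting almost all mass on $\{X-Y\le r-1\}$. Since $\E[X-Y]=\E_{\eta_a}[X]-\E_{\eta_b}[Y]$ is determined by the marginals alone, these two joints yield incompatible bounds on the same quantity. That single linearity observation is the missing idea in your sketch, and it is what makes the argument short and free of the combinatorial bookkeeping you anticipate.
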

\begin{proof}
  For contradiction, let's assume that $\exists \eta_a, \eta_b \text{ such that } \epsilon^*(\eta_a, \eta_b) > \frac{1}{2} \cdot \frac{(2k-3)}{(2k-1)}$. 
  For brevity let $\psi_k = \frac{1}{(2k-1)}$ and let $\epsilon(r,\eta_a,\eta_b)$, $\nu_\ell(r, \eta_a, \eta_b)$ and $\nu_u(r, \eta_a, \eta_b)$ be denoted as $\epsilon(r),\nu_\ell(r)$ and $\nu_u(r)$ respectively. 
  Therefore, we have:
  $$\varepsilon(r) > 1/2 - \psi_k \quad \forall r \in \{-(k-1), \cdots, (k-1)\},$$ 
  where from \Cref{defn:median_width}
  $$\varepsilon(r) = \max \{ \left( \nu_l(r) - 1/2\right)^+, (1/2 -  \nu_u(r))^+\}.$$

Notice that $\nu_\ell(-(k-1)) = 0$, thus, $\nu_u(-(k-1))< \psi_k$. However, $\nu_u(k-1)=1$, thus $\nu_\ell((k-1)) > 1- \psi_k$. 
Now, let $r\in\ITE{}$ be the smallest value such that $\nu_{\ell}(r)>1-\psi_k$, therefore, $\nu_{\ell}(r-1)\leq 1-\psi_k$ and hence $\nu_{u}(r-1)<\psi_k$. 
We show that the existence of such an $r$ is impossible by analyzing $\E_{X\leftarrow\eta_a}[X]- \E_{Y\leftarrow\eta_b}[Y]$. 

Notice that for any joint distribution $\eta$ such that its marginals are $\eta_a$ and $\eta_b$ and $q_u(r-1,\eta)<\psi_k$ we have
\begin{align*}
    \E_{X\leftarrow\eta_a}[X]- \E_{Y\leftarrow\eta_b}[Y] &=\E_{X,Y\leftarrow\eta} [X-Y]> -(k-1)\cdot \psi_k + (r)\cdot (1-\psi_k).
\end{align*}

Similarly for any joint distribution $\eta'$ such that its marginals are $\eta_a$ and $\eta_b$ and $q_\ell(r,\eta)>1-\psi_k$ we have
\begin{align*}
    \E_{X\leftarrow\eta_a}[X]- \E_{Y\leftarrow\eta_b}[Y] &=\E_{X,Y\leftarrow\eta'} [X-Y]< (r-1)\cdot(1- \psi_k) + (k-1)\cdot \psi_k.
\end{align*}

Hence, we obtain 
\begin{align*}
    -(k-1)\cdot \psi_k + (r)\cdot (1-\psi_k) &<(r-1)\cdot(1- \psi_k) + (k-1)\cdot \psi_k\\
    &\implies 1-\psi_k < 2(k-1)\psi_k\\
    &\implies \frac{2k-2}{2k-1} < \frac{2k-2}{2k-1}, 
\end{align*}
which is a contradiction. 
Therefore, the posited $r$ can't exist and this completes the proof.
\end{proof}

We show a tightness result by arguing that there exists $\eta_a, \eta_b$ that satisfies the stated upper bound.
\begin{theorem}[Extremal distributions for Minimum Median Width]
\label{thm:lower_bound_minimum_median_width}
For any $k\in \N$, $r\in\POk$, let $\eta_a,\eta_b$ be distributions over $\POk$ defined as:
\[
\eta_a(r)= \frac{1}{2k-1}\cdot (1+\mathbf{1}\{ r>0 \}) \mbox{ and }
\eta_b(r)= \frac{1}{2k-1}\cdot (1+\mathbf{1}\{r<k-1\}).
\]

Then, $\epsilon^*(\eta_a,\eta_b) \geq \frac{1}{2}\cdot \frac{2k-3}{2k-1}$. Combined with \Cref{lem:upper_bd_min_median_width} we have $\epsilon^*(\eta_a,\eta_b)  = \frac{1}{2}\cdot \frac{2k-3}{2k-1}$.
\end{theorem}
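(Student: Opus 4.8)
The plan is to prove the equivalent statement that $\varepsilon(r,\eta_a,\eta_b)\geq \frac12\cdot\frac{2k-3}{2k-1}$ for \emph{every} $r\in\{-(k-1),\dots,k-1\}$; since $\epsilon^*(\eta_a,\eta_b)$ is by \Cref{defn:median_width} the minimum of $\varepsilon(r,\eta_a,\eta_b)$ over exactly this range, that yields $\epsilon^*(\eta_a,\eta_b)\geq \frac12\cdot\frac{2k-3}{2k-1}$, and combining with \Cref{lem:upper_bd_min_median_width} forces equality. Write $N=2k-1$, so $\eta_a$ puts mass $1/N$ on $0$ and $2/N$ on each of $1,\dots,k-1$, while $\eta_b$ puts mass $2/N$ on each of $0,\dots,k-2$ and $1/N$ on $k-1$ (note $\eta_b(y)=\eta_a(k-1-y)$). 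The arithmetic identity to keep front of mind is
\[
\tfrac12\cdot\tfrac{2k-3}{2k-1}=\tfrac{N-2}{2N}=\tfrac{N-1}{N}-\tfrac12=\tfrac12-\tfrac1N,
\]
i.e.\ a deviation of exactly $1/N$ from $1/2$.

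Since $\varepsilon(r)=\max\{(\nu_\ell(r)-\tfrac12)^+,(\tfrac12-\nu_u(r))^+\}$ with $\nu_\ell(r)=\max_{\eta\in\mathcal J(\eta_a,\eta_b)}q_\ell(r,\eta)$ and $\nu_u(r)=\min_{\eta\in\mathcal J(\eta_a,\eta_b)}q_u(r,\eta)$, it suffices to exhibit, for each $r$, a single witnessing joint distribution. I would split on the sign of $r$: for $r\geq1$ it is enough to find $\eta^{(1)}\in\mathcal J(\eta_a,\eta_b)$ with $q_\ell(r,\eta^{(1)})\geq\frac{N-1}{N}$ (then $\varepsilon(r)\geq\frac{N-1}{N}-\frac12$), and for $r\leq0$ it is enough to find $\eta^{(2)}\in\mathcal J(\eta_a,\eta_b)$ with $q_u(r,\eta^{(2)})\leq\frac1N$ (then $\varepsilon(r)\geq\frac12-\frac1N$). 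Every $r$ in the range falls into one of these two cases. For $\eta^{(1)}$ I would take the coupling placing mass $1/N$ on $(0,0)$, mass $2/N$ on each $(j,j)$ for $j=1,\dots,k-2$, mass $1/N$ on $(k-1,k-1)$, and the single leftover mass $1/N$ on $(k-1,0)$; a one-line check gives marginals $\eta_a,\eta_b$, and under $\eta^{(1)}$ the difference $a-b$ equals $0$ with probability $\frac{N-1}{N}$ and $k-1$ with probability $\frac1N$, so $q_\ell(r,\eta^{(1)})=\frac{N-1}{N}$ for all $1\leq r\leq k-1$. For $\eta^{(2)}$ I would (essentially reflecting) put mass $1/N$ on $(0,k-1)$ and mass $2/N$ on each $(j,j-1)$ for $j=1,\dots,k-1$; again the marginals are $\eta_a,\eta_b$, and now $a-b$ equals $-(k-1)$ with probability $\frac1N$ and $1$ with probability $\frac{N-1}{N}$, so $q_u(r,\eta^{(2)})=\frac1N$ for all $-(k-1)\leq r\leq 0$.

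The only genuine subtlety — the main obstacle — is recognizing that the naive monotone (quantile) coupling fails: it spreads the event $\{a\neq b\}$ across all $k-1$ "steps" of the staircase marginals, producing a deviation on the order of $(k-1)/N$ instead of the required $1/N$. One must instead use the \emph{greedy} matching that pairs each $a$-value with the largest possible (respectively smallest possible) admissible $b$-value; because the mass profiles are $[1,2,\dots,2]$ for $\eta_a$ and $[2,\dots,2,1]$ for $\eta_b$, this greedy process consumes all mass except a single leftover pair $(k-1,0)$ (respectively $(0,k-1)$), which then carries the entire discrepancy. Checking feasibility of this greedy matching — i.e.\ that the leftover is exactly one pair of mass $1/N$, which is where the specific shape of the extremal marginals is used — is the crux; the remainder is bookkeeping of the distribution of $a-b$ and the identity $\frac{N-1}{N}-\frac12=\frac1N$. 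Assembling the two cases over all $r\in\{-(k-1),\dots,k-1\}$ and invoking \Cref{lem:upper_bd_min_median_width} completes the proof.
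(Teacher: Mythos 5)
Your proof is correct and essentially identical to the paper's: your $\eta^{(1)}$ is exactly the paper's coupling ($b=a$ unless $a=k-1$, then $b\in\{0,k-1\}$ uniformly) witnessing $\nu_\ell(r)\geq\frac{2k-2}{2k-1}$ for $r\geq 1$, and your $\eta^{(2)}$ is exactly the paper's coupling $b=(a-1)\bmod k$ witnessing $\nu_u(r)\leq\frac{1}{2k-1}$ for $r\leq 0$. The only cosmetic difference is that the paper extends from $r=1$ and $r=0$ to all $r$ via monotonicity of $\nu_\ell,\nu_u$ in $r$, whereas you read the bound off directly from the two-point support of $a-b$ under each coupling.
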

\begin{proof}
    We first show that $\nu_u(0,\eta_a,\eta_b)\leq \frac{1}{2k-1}$. For this consider the joint distribution $\eta$ described as follows: sample $a\sim \eta_a$ and let $b=(a-1)\mod k$. It is easy to see that $b$ is distributed as $\eta_b$. Further, $a-b$ is $-(k-1)$ with probability $\frac{1}{2k-1}$ and is $1$ otherwise. Thus, $q_u(0,\eta)= \frac{1}{2k-1}$, and hence, $\nu_u(0,\eta_a,\eta_b)\leq \frac{1}{2k-1}$.

    Next, we show that $\nu_\ell(1,\eta_a,\eta_b)\geq \frac{2k-2}{2k-1}$. For this consider the joint distribution $\eta$ described as follows: sample $a\sim \eta_a$ and if $a\neq k-1$ let $b=a$; if $a=k-1$ then let $b=0\text{ or }k-1$ with probability $0.5$.
    It is easy to see that $b$ is distributed as $\eta_b$. Further, $a-b$ is $(k-1)$ with probability $\frac{1}{2k-1}$ and is $0$ otherwise. Thus, $q_\ell(1,\eta)= \frac{2k-2}{2k-1}$, and hence, $\nu_\ell(1,\eta_a,\eta_b)\geq \frac{2k-2}{2k-1}$.

    Notice that both $\nu_u(r,\eta_a,\eta_b)$ and $\nu_\ell(r,\eta_a,\eta_b)$ are increasing functions of $r$. Hence, for all $r\leq 0$ we have $\nu_u(r,\eta_a,\eta_b)\leq \frac{1}{2k-1}$ and hence for all $r\leq 0$ we have $\epsilon(r,\eta_a,\eta_b)\geq 0.5\cdot \frac{2k-3}{2k-1}$. Similarly, for all $r\geq 1$ we have $\nu_\ell(r,\eta_a,\eta_b)\geq \frac{2k-2}{2k-1}$ and hence  for all $r\geq 1$ we have $\epsilon(r,\eta_a,\eta_b)\geq 0.5\cdot \frac{2k-3}{2k-1}$. This proves the theorem.
\end{proof}

%%%%%%%%%%%%%%%%%%%%%%%%%%%%%%%%%%%%%%%

%%%%%%%%%%%%%% Algorithm %%%%%%%%%%%%%%

\section{Algorithm}\label{sec:algorithm}

In this section, we describe the Algorithm~\medianestimator that takes as input the observed response vectors $\hat{\mathbf{a}}$, $\hat{\mathbf{b}}$, and returns an estimate $\hat m$ for the median of the vector $\mathbf{a - b}$, along with an error estimate $\epsilon$; with a guarantee that $\hat m$ is within an $\epsilon$ band around the median, i.e., $\hat m \in \uplowquant{\mathbf{a-b}}{0.5-\epsilon}{0.5+\epsilon}.$ %The missing proofs are collected in Appendix~\ref{app:median_estimator} and~\ref{app:computing_variability}.

First, in section~\ref{subsec:medianestimator}, we describe an algorithm~\medianestimator that uses the algorithm \variability and returns a median estimate $\hat m$ and its $\epsilon$ band. Next, in section~\ref{subsec:variability} we describe an algorithm~\variability that computes variability $\nu(r, \eta_a, \eta_b)$, given a feasible median value $r \in \{-(k-1), \cdots, (k-1)\}$, and marginals $\eta_a$, $\eta_b$. 

\subsection{Approximating the Median}\label{subsec:medianestimator}
In Algorithm~\ref{alg:medianestimator}, we first compute the empirical distributions $\rho_a, \rho_b$ that are close approximations to $\eta_a, \eta_b$ (up to Chernoff errors). Then, using Algorithms~\variabilitylowerquantile and \variabilityupperquantile, we compute the variability for every feasible median value in the set $\{-(k-1), \cdots, (k-1)\}$. The minimizer in the set outputs the median estimate $\hat m$ and the corresponding minimum median width (see defn:\ref{defn:median_width}) $\epsilon(\hat m, \rho_a, \rho_b)$ as the approximation error.

\begin{algorithm}[!h]
\caption{\medianestimator}
\label{alg:medianestimator}
\begin{algorithmic}[1]
\STATE \textbf{Input:} Treatment and control response vectors $\hat{\mathbf{a}}, \hat{\mathbf{b}}$ and slack parameter $\beta>0$ 
\STATE \textbf{Output:} Median estimate $\hat m$ and the width $\epsilon$
\STATE Compute the empirical distributions $\rho_a$, $\rho_b$ as follows: 
\begin{align*}
    \forall j \in \POk \quad \rho_a(j) &= \frac{|\{i | \hat{\mathbf{a}}_i = j \text{ for } i \in \{1, 2, \cdots, n\}\}|}{n/2}\\
    \forall j \in \POk \quad \rho_b(j) &= \frac{|\{i | \hat{\mathbf{b}}_i = j \text{ for } i \in \{1, 2, \cdots, n\}\}|}{n/2}
\end{align*}
\FOR{$r$ in $\{-(k-1), \ldots, (k-1)\}$}
\STATE Compute $\nu_\ell(r, \rho_a, \rho_b)$ and $\nu_u(r,\rho_a,\rho_b)$ using \Cref{alg:variability-lower-quantile,alg:variability-upper-quantile}
\STATE Compute $\varepsilon(r, \rho_a, \rho_b) := \max \left\{ \left( \nu_l(r, \rho_a, \rho_b) - \frac{1}{2}\right)^+, \left(\frac{1}{2} -  \nu_u(r, \rho_a, \rho_b)\right)^+\right\}$
\ENDFOR
\STATE $\hat m \leftarrow \argmin\limits_{r \in \{- (k-1), \ldots, (k-1)\}} \varepsilon(r, \rho_a, \rho_b)$; 
$\epsilon \leftarrow \varepsilon(\hat m, \rho_a, \rho_b) + 2k\beta$.
\STATE \textbf{return} {$\hat m, \epsilon$}.
\end{algorithmic}
\end{algorithm}

\begin{theorem}
Let $\mathbf{a,b}\in \POk^n$ be the treatment and control vectors. Define the true empirical frequency vectors as $\eta_a(j) = \frac{|\{i \in [n] | {\mathbf{a}}_i = j \}|}{n}$ and $\eta_b(j) = \frac{|\{i\in [n] | {\mathbf{b}}_i = j \}|}{n}$.
Suppose that the \emph{Bernoulli Design} is used to assign the individuals to the treatment and control groups, i.e., each individual is assigned to one of the groups independently with probability $1/2$. 
Let the observed response vectors be $\hat{\mathbf{a}}, \hat{\mathbf{b}}$. Then, Algorithm~\medianestimator is a $\delta$-error median estimator $\hat m$ (\Cref{defn:width_rfdt}), with width $\epsilon \le \epsilon^*(\eta_a,\eta_b) + 2k \beta$, where $\delta  = 2k \cdot \exp{\left( - 2\beta^2 n\right)}$ and runs in $O(n+k^2)$ time.
\end{theorem}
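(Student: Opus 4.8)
The plan is to combine three ingredients: (i) a Chernoff/union bound showing that the empirical distributions $\rho_a,\rho_b$ computed in Algorithm~\ref{alg:medianestimator} are uniformly close to the true frequency vectors $\eta_a,\eta_b$; (ii) a robustness statement that the per-value widths $\varepsilon(r,\cdot,\cdot)$ — and hence $\epsilon^*(\cdot,\cdot)$ — change by only $O(k)$ times the $\ell_\infty$ perturbation of the marginals; and (iii) the observation from the \Cref{defn:median_width} remark that $\hat m \in \uplowquant{\mathbf{a-b}}{0.5-\varepsilon(\hat m,\eta_a,\eta_b)}{0.5+\varepsilon(\hat m,\eta_a,\eta_b)}$ whenever the reported width is at least the true width-of-$\hat m$. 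The running time claim follows by inspection: the empirical distributions take $O(n)$ to tabulate, and the loop runs $O(k)$ iterations, each invoking \variabilitylowerquantile/\variabilityupperquantile which (per Section~\ref{subsec:variability}) run in $O(k)$ time, giving $O(n+k^2)$ overall.

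First I would set up the concentration argument. Under the Bernoulli design each individual lands in treatment or control independently with probability $1/2$; conditioning on the treatment set having size $n/2$ (or handling the fluctuation of the group sizes, which is itself a Chernoff event), the count $|\{i:\hat{\mathbf a}_i=j\}|$ is a sum of independent indicators with mean $\tfrac{n}{2}\eta_a(j)$. Hoeffding's inequality gives $\Pr[|\rho_a(j)-\eta_a(j)|>\beta]\le 2\exp(-2\beta^2 n)$ for each fixed $j\in\POk$, and likewise for $\rho_b(j)$; a union bound over the $2k$ coordinates yields that with probability at least $1-4k\exp(-2\beta^2 n)$ we have $\norm{\rho_a-\eta_a}_\infty\le\beta$ and $\norm{\rho_b-\eta_b}_\infty\le\beta$ simultaneously. (The exact constant in $\delta$ is cosmetic; absorbing the group-size fluctuation and the factor of $2$ into the statement is routine, and one can rescale $\beta$ to land on the clean $\delta=2k\exp(-2\beta^2 n)$ if desired.)

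Next I would establish the Lipschitz-type bound: if $\norm{\rho_a-\eta_a}_\infty,\norm{\rho_b-\eta_b}_\infty\le\beta$, then for every $r$, $|\nu_\ell(r,\rho_a,\rho_b)-\nu_\ell(r,\eta_a,\eta_b)|\le k\beta$ and similarly for $\nu_u$, hence $|\varepsilon(r,\rho_a,\rho_b)-\varepsilon(r,\eta_a,\eta_b)|\le k\beta$ and $|\epsilon^*(\rho_a,\rho_b)-\epsilon^*(\eta_a,\eta_b)|\le k\beta$. The cleanest way to see this is transportation: given an optimal joint $\eta\in\mathcal J(\eta_a,\eta_b)$ achieving $\nu_\ell(r,\eta_a,\eta_b)$, one can move at most $\norm{\rho_a-\eta_a}_\infty$ mass per row and $\norm{\rho_b-\eta_b}_\infty$ mass per column to obtain a coupling in $\mathcal J(\rho_a,\rho_b)$ differing in total variation by at most $k\beta$, which perturbs $q_\ell(r,\cdot)$ by at most $k\beta$; the reverse direction is symmetric. (Alternatively one invokes whatever explicit formula Section~\ref{subsec:variability} derives for $\nu_\ell,\nu_u$ and checks it is $k\beta$-Lipschitz in the marginals directly — this is the easier route if that formula is simple.) Chaining: $\varepsilon(\hat m,\rho_a,\rho_b)\ge\varepsilon(\hat m,\eta_a,\eta_b)-k\beta$, and since $\hat m$ minimizes $\varepsilon(\cdot,\rho_a,\rho_b)$, $\varepsilon(\hat m,\rho_a,\rho_b)=\epsilon^*(\rho_a,\rho_b)\le\epsilon^*(\eta_a,\eta_b)+k\beta$. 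The reported width is $\epsilon=\varepsilon(\hat m,\rho_a,\rho_b)+2k\beta$, so $\epsilon\le\epsilon^*(\eta_a,\eta_b)+3k\beta$; after rescaling $\beta$ this is the advertised $\epsilon\le\epsilon^*(\eta_a,\eta_b)+2k\beta$.

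Finally, for correctness: on the high-probability event, $\epsilon=\varepsilon(\hat m,\rho_a,\rho_b)+2k\beta\ge\varepsilon(\hat m,\eta_a,\eta_b)+k\beta\ge\varepsilon(\hat m,\eta_a,\eta_b)$, so by \Cref{prop:joint_dist_equiv_of_a,b} (applied to the joint $\eta$ induced by the uniform distribution on the coordinates of $(\mathbf a,\mathbf b)$, whose marginals are exactly $\eta_a,\eta_b$) together with the remark following \Cref{defn:median_width}, we have $\hat m\in\uplowquant{\mathbf{a-b}}{0.5-\varepsilon(\hat m,\eta_a,\eta_b)}{0.5+\varepsilon(\hat m,\eta_a,\eta_b)}\subseteq\uplowquant{\mathbf{a-b}}{0.5-\epsilon}{0.5+\epsilon}$, the last inclusion because widening the band only enlarges the quantile set. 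Hence $\Pr[\hat m\notin\uplowquant{\mathbf{a-b}}{0.5-\epsilon}{0.5+\epsilon}]\le\delta$, establishing the $\delta$-error guarantee. I expect the main obstacle to be step (ii): getting the perturbation constant right (it must come out $O(k)$, not $O(k^2)$, for the stated bound) requires either a careful transportation argument or the explicit combinatorial description of $\nu_\ell,\nu_u$ from Section~\ref{subsec:variability}; a secondary nuisance is cleanly handling the randomness of the treatment-group size under the Bernoulli design, which I would dispatch by conditioning and a separate Chernoff bound folded into $\delta$.
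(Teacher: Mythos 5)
Your proposal is correct and follows essentially the same route as the paper's proof: a Chernoff-plus-union bound for the empirical marginals, a coupling/transportation argument showing $\nu_\ell$, $\nu_u$, and hence $\varepsilon(r,\cdot,\cdot)$ and $\epsilon^*$ shift by at most $k\beta$, and \Cref{prop:joint_dist_equiv_of_a,b} together with the remark after \Cref{defn:median_width} to convert the width guarantee into membership in the quantile band. Your observation that the chaining naively yields $\epsilon^*(\eta_a,\eta_b)+3k\beta$ rather than $2k\beta$ (absorbable by rescaling $\beta$) is a fair reading of the same constant slack present in the paper's own argument.
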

\begin{proof}
Consider the observed empirical distributions $\rho_a$, $\rho_b$ (Line 3 in Alg~\ref{alg:medianestimator}) obtained from the response vectors $\hat{\mathbf{a}}, \hat{\mathbf{b}}$. From Chernoff and union bound, we have the following: $$\max_{j \in \POk} \max\{ |\rho_a(j) - \eta_a(j)|, |\rho_b(j) - \eta_b(j)|\} \le \beta,$$
with probability at least $1-2k \cdot \exp{\left( - 2\beta^2 n\right)}$.

 Therefore, we focus on the case when the distributions $\rho_a$ (or $\rho_b$) and $\eta_a$ (or $\rho_b$) are separated by a point-wise distance of at most $\beta$; hence the total variation (TV distance between $\rho_a$ (or $\rho_b$) and $\eta_a$ (or $\rho_b$) is at most $(1/2)k\beta$. Our goal is to calculate the deviation in variability (and width) due to this distance.
 
 Consider the comparison of $\nu_\ell(r,\eta_a,\eta_b)$ and $\nu_\ell(r,\rho_a,\rho_b)$ for some $r$. Suppose $\eta$ is the optimizer of the LP in \Cref{LP:1} for  $\nu_\ell(r,\eta_a,\eta_b)$. 
 Next, we show that there is a joint distribution $\rho$ with marginals $\rho_a$ and $\rho_b$ which is within a TV distance $k\beta$ of $\eta$. 
 Let $\mathcal{J}_a$ be an optimal coupling between $\eta_a$ and $\rho_a$, i.e., a joint distribution with marginals $\eta_a$ and $\rho_a$ such  that $\Pr\limits_{X,X'\leftarrow\mathcal{J}_a}[X\neq X']\leq  (1/2)k\beta$. Similarly, define $\mathcal{J}_b$. Then, $\rho$ is described as follows: sample $(X,Y)\leftarrow \eta$ and generate $X'\leftarrow \mathcal{J}_a|X$ and $Y'\leftarrow\mathcal{J}_b|Y$ independently, and output $(X',Y')$. It is easy to see that the distribution of $(X',Y')$ is $\rho$ and that $\Pr[(X,Y)\neq (X',Y')]\leq k\beta$. Therefore, $|\rho-\eta|_{TV}\leq k\beta$. However, this gives that $\Pr\limits_{(X,Y)\leftarrow \eta}[X-Y<r]$ is $\Pr\limits_{(X,Y)\leftarrow \eta}[X-Y<r]\pm k\beta$. Together with a symmetric argument starting with the optimizer of $\nu_\ell(r,\rho_a,\rho_b)$, we obtain $\nu_\ell(r,\eta_a,\eta_b)$ and $\nu_\ell(r,\rho_a,\rho_b)\pm \beta$. 
 Hence, we also obtain $\epsilon^*(\eta_a,\eta_b)$ is $\epsilon^*(\rho_a,\rho_b)\pm k\beta$.

However, the above implies that the $\epsilon$ output by \Cref{alg:medianestimator} is at least $\epsilon^{*}(\eta_a,\eta_b)$ and at most $\epsilon^{*}(\eta_a,\eta_b)+2k\beta$.
Also, by Proposition \ref{prop:joint_dist_equiv_of_a,b} let $\eta$ be the joint distribution such that $\mathbf{(a,b)}$ is typical for $\eta$. Then, we know that for each $r\in \ITE{}$, $q_\ell(r,\mathbf{a-b}) = q_\ell(r,\eta)\leq \nu_\ell(r,\eta_a,\eta_b)\leq \nu_\ell(r,\rho_a,\rho_b)+k\beta$ and $q_u(r,\mathbf{a-b}) = q_u(r,\eta)\geq \nu_u(r,\eta_a,\eta_b)\geq \nu_u(r,\rho_a,\rho_b)-k\beta$.
In particular, this is true for $r=m$, the output of \Cref{alg:medianestimator}. 
Therefore, $m\in (\mathbf{a-b})^{0.5+\epsilon}_{0.5-\epsilon}$. This completes the proof of correctness of \Cref{alg:medianestimator}.

 For the running time analysis observe that we calculate variability values for $2k-1$ possible median values (Line 4 in Algorithm~\medianestimator) after computing the empirical distributions $\rho_a, \rho_b$. Therefore, our claim regarding the running time follows from Lemmas~\ref{lem:variability-lower-quantile} and~\ref{lem:variability-upper-quantile}.
\end{proof}

\subsection{Computing Variability}\label{subsec:variability}

In section~\ref{sec:variability}, we have shown that computing the minimum median width of marginals (by calculating the variability for every feasible median value) captures the inherent complexity of approximating the median. In this section, we show that there is a natural greedy algorithm that computes the variability exactly.  To compute variability, we need to compute $\nu_{l}(r, \eta_a, \eta_b)$ and $\nu_u(r, \eta_a, \eta_b)$, where $r$ is a feasible median value and $\eta_a, \eta_b$ are the marginals. As $\nu_{l}(r, \eta_a, \eta_b) = \max_{\eta} \sum_{x, y} \mathbf{1} \{ x - y < r \} \ \eta_{xy}$, we need to identify an appropriate joint distribution $\eta$, that maximizes the total probability for the lower quantile. It turns out that we can formulate this as a linear program shown below (see Figure~\ref{LP:1}), with marginal and non-negativity constraints.

\begin{figure}[!h]
% \begin{minipage}{0.5\textwidth}
    \begin{align*}
        \underline{\text{Objective}}: & \quad \max \sum_{x, y} \mathbf{1} \{x - y < r\} \eta_{xy}\\
        \text{(Marginal Constraints)}& \quad \sum_{y \in \POk} \eta_{xy} = \eta_a(x) \quad \forall x \in \POk\\
        \text{(Marginal Constraints)}& \quad \sum_{x \in \POk} \eta_{xy} = \eta_b(y) \quad \forall y \in \POk\\
        % \text{(Distribution)}& \quad \sum_{xy} \eta_{xy} = 1\\
        \text{(Non-negativity)}& \quad \eta_{xy} \ge 0 \quad \forall x, y \in \POk \times \POk
    \end{align*}
% \end{minipage}%
\caption{Linear Program for computing the lower quantile component of variability}
\label{LP:1}

\end{figure}

\begin{algorithm}[!h]
\caption{\variabilitylowerquantile}
\label{alg:variability-lower-quantile}
\begin{algorithmic}[1]
\STATE \textbf{Input:} Marginal distributions $\eta_a, \eta_b$ and feasible median value $r \in \{-(k-1), \cdots, (k-1) \}$.
\STATE \textbf{Output:} $\nu_l(r, \eta_a, \eta_b)$.
\STATE $\forall x \in \POk, y \in \POk$, set $\eta_{xy} \leftarrow 0$
\FOR{$y \in \{k-1, k-2, \cdots, 0\}$}
\FOR{$x \in \{y + r-1, y+ r-2, \cdots, 0\} \cap \POk$}
\STATE Identify all the marginal constraints in LP~\ref{LP:1} containing $\eta_{xy}$ and then slowly increase the value of $\eta_{xy}$ until one of them becomes tight. 
\STATE We mark the corresponding column or row with the tight constraint as \emph{frozen} and do not update any values in them.
% \IF{$\sum_{z \in \POk}\eta_{xz} = \eta_a(x)$}
% \STATE For all $z \in \POk$, freeze $\eta_{xz} \leftarrow \max\{\eta_{xz}, 0\}$
% \ELSIF{$\sum_{z \in \POk}\eta_{zy} = \eta_b(y)$}
% \STATE For all $z \in \POk$, freeze $\eta_{zy} \leftarrow \max\{\eta_{zy}, 0\}$
% \ENDIF
\ENDFOR
\ENDFOR
\STATE \textbf{Return} $\nu_l(r, \eta_a, \eta_b) \leftarrow \sum_{x, y \in \POk \times \POk} \mathbf{1} \{ x - y < r \} \eta_{xy}$.
\end{algorithmic}
\end{algorithm}
We can view solving the linear program (in Figure~\ref{LP:1}) as essentially filling in the entries of a 2-dimensional matrix of size $\POk \times \POk$ such that the difference in indices is strictly smaller than $r$, i.e., $x - y < r \ \forall x, y$. To do so, we employ a greedy strategy in Algorithm~\variabilitylowerquantile (see Alg~\ref{alg:variability-lower-quantile}). Without loss of generality, we fill the matrix, greedily from the last column and last row and move up the column (see Lines 4-5 in Alg~\ref{alg:variability-lower-quantile}). Our greedy strategy involves increasing the value of $\eta_{xy}$ until one of the marginal constraints is tight and freezing the values of the row/column corresponding to the constraint (see Lines 7-10 in Alg~\ref{alg:variability-lower-quantile}). A similar approach results in an algorithm~\ref{alg:variability-upper-quantile} for computing the upper quantile component of variability. Formally:

\begin{lemma}\label{lem:variability-lower-quantile}
Given a feasible value $r \in \{-(k-1), \cdots, (k-1) \}$ and marginals $\eta_a, \eta_b$, there is an algorithm for computing $\nu_l(r, \eta_a, \eta_b)$ exactly, and runs in $O(k)$ time.    
\end{lemma}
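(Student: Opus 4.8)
The plan is to establish two things: (i) the greedy Algorithm~\ref{alg:variability-lower-quantile} produces a feasible joint distribution $\eta$ with marginals $\eta_a,\eta_b$, and (ii) the value $\sum_{x,y}\mathbf{1}\{x-y<r\}\eta_{xy}$ it achieves equals the LP optimum $\nu_\ell(r,\eta_a,\eta_b)$; the $O(k)$ running time will then follow from a careful accounting of the work. I would first set up the structural picture: the LP variables $\eta_{xy}$ form a $k\times k$ matrix, and the objective only rewards mass placed in the "lower-left" region $L_r := \{(x,y): x-y<r\}$. The algorithm greedily pours mass into cells of $L_r$, processing columns $y$ from $k-1$ down to $0$ and, within each column, rows $x$ from the topmost rewarded row $\min(y+r-1,k-1)$ downward, each time raising $\eta_{xy}$ until some row-sum or column-sum constraint becomes tight, then freezing that row/column. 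I would note that mass that is never placed in $L_r$ (because the relevant rows/columns got frozen) can always be assigned afterward to cells outside $L_r$ to complete a valid joint distribution — this uses a standard Hall-type / flow argument that, given fixed marginals summing to $1$, any partial assignment respecting the marginals extends to a full one.

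The heart of the argument is optimality. I would prove it via LP duality (complementary slackness) or, more concretely, via an exchange/greedy-matroid argument: show that the greedy choice is never regretted. The key monotonicity fact to extract is that the region $L_r$ has a "staircase" shape — if $(x,y)\in L_r$ then $(x',y)\in L_r$ for all $x'<x$ and $(x,y')\in L_r$ for all $y'>y$. Because of this nested structure, processing $y$ in decreasing order and $x$ in decreasing order means that whenever we freeze a column $y$ (its marginal $\eta_b(y)$ exhausted entirely inside $L_r$) or a row $x$ (its marginal $\eta_a(x)$ exhausted entirely inside $L_r$), we have committed that entire marginal's worth of mass to the objective, which is clearly the best possible for that row/column; and whenever a constraint is *not* exhausted inside $L_r$, it is because the complementary coordinate ran out first. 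I would make this precise by exhibiting a dual solution: assign dual weights to the tight row/column constraints equal to the "time" at which they froze, argue these are feasible for the dual LP, and check that the primal greedy solution and this dual satisfy complementary slackness. Equivalently, one can argue by induction on the number of frozen lines that the greedy prefix is extendable to an optimal solution.

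For the running time: the outer double loop over $(y,x)$ ranges over $O(k^2)$ cells a priori, but the crucial observation is that each iteration in which $\eta_{xy}$ is set to a positive value freezes at least one new row or column, and each cell that gets frozen before we reach it is skipped in $O(1)$; since there are only $2k$ lines total, only $O(k)$ cells ever receive positive mass, and the total work — maintaining residual marginals, detecting the next tight constraint, marking frozen lines — is $O(1)$ amortized per such cell, for $O(k)$ overall. (One must be slightly careful that iterating the index sets themselves doesn't cost $\Theta(k^2)$; this is handled by advancing pointers within each column and stopping a column as soon as it freezes.) I expect the main obstacle to be the optimality proof — specifically, pinning down the exchange argument cleanly given that multiple constraints can go tight "simultaneously" and that a single step can freeze both a row and a column; handling these ties correctly (e.g., by a consistent tie-breaking rule and showing it doesn't affect the objective value) is the delicate part. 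The feasibility and running-time parts are comparatively routine.
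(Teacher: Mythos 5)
Your proposal follows essentially the same route as the paper's proof: a greedy fill of the staircase region $\{(x,y):x-y<r\}$, optimality via an exchange argument that rotates an optimal LP solution to agree with the greedy one without decreasing the objective, and the $O(k)$ bound from the fact that each positive assignment freezes a row or column (so at most $2k$ cells are ever touched). Your added care about feasibility of the completed joint distribution and about not paying $\Theta(k^2)$ just to iterate the loop indices are reasonable refinements of the same argument rather than a different approach.
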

\begin{proof}
    % \begin{enumerate}
         Let $\eta^*$ denote the output of the greedy Algorithm and $\Bar{\eta}$ be an optimizer of \autoref{LP:1}. Let us assume: \[\sum_{x, y} \mathbf{1} \{x - y < r\} \etastar_{xy} < \sum_{x, y} \mathbf{1} \{x - y < r\} \etabar_{xy}\]
         As the greedy algorithm fills the entries of $\etastar_{xy}$ by making sure that one of the constraints is tight: we will be able to identify a pair (wlog first pair) $x, y$ such that $\etastar_{xy} > \etabar_{xy}$. Otherwise, we would have $\etabar = \etastar$.
         Now consider the marginal constraint \[\sum_{x \in \POk} \etastar_{xy} = \sum_{x \in \POk} \etabar_{xy} =  \eta_b(y).\] Therefore, we have a $u$ such that: 
        $\etastar_{uy} < \etabar_{uy}$.
         Similarly the marginal constraint
        \[\sum_{y \in \POk} \etastar_{xy} = \sum_{y \in \POk} \etabar_{xy} =  \eta_a(x).\]
        Therefore, we have a $v$ such that:
        $\etastar_{xv} < \etabar_{xv}$.
         We update $\etabar$ by increasing the value of $\etabar_{xy}$ as follows:
        \begin{enumerate}
            \item $\etabarup_{uy} = \max\{ 0, \etabar_{uy} - (\etastar_{xy} - \etabar_{xy})\}$
            \item $\etabarup_{xv} = \max\{0, \etabar_{xv} - (\etastar_{xy} - \etabar_{xy})\}$
            \item $\etabarup_{uv} = \etabar_{uv} + (\etastar_{xy} - \etabar_{xy})$
            \item $\etabarup_{xy} = \etabar_{xy} + (\etastar_{xy} - \etabar_{xy})$
        \end{enumerate}
         Observe that such an update satisfies the marginal constraints and the objective function remains the same for both $\etabar$ and $\etabarup$.
         The updated $\etabarup$ such that $\etabarup_{xy} = \etastar_{xy}$. By repeatedly following this updating process, we will have $\etabarup = \etastar$, a contradiction to our claim. As we eliminate a row or column by making a marginal constraint tight in each iteration of our algorithm. The total number of iterations and the values of $(x, y) \in \POk \times \POk$ pairs that we fill are at most $2k$. Hence, the running time of our algorithm is $O(k)$. 
    % \end{enumerate}
\end{proof}

As $\nu_{u}(r, \eta_a, \eta_b) = \min_{\eta} \sum_{x, y} \mathbf{1} \{ x - y \le r \} \ \eta_{xy}$, we need to identify an appropriate joint distribution $\eta$, that minimizes the total probability for the upper quantile. It turns out that we can formulate this as a linear program shown below (see Figure~\ref{LP:2}), with marginal and non-negativity constraints. Similar to Algorithm~\ref{alg:variability-lower-quantile}, we have Algorithm~\variabilityupperquantile that computes the value $\nu_{u}(r, \eta_a, \eta_b)$ exactly.

\begin{figure}[!h]
% \begin{minipage}{0.5\textwidth}
    \begin{align*}
        \underline{\text{Objective}}: & \quad \min \sum_{x, y} \mathbf{1} \{x - y \le r\} \eta_{xy}\\
         &\quad \sum_{y \in \POk} \eta_{xy} = \eta_a(x) \quad \forall x \in \POk\\
         &\quad \sum_{x \in \POk} \eta_{xy} = \eta_b(y) \quad \forall y \in \POk\\
         % &\quad \sum_{xy} \eta_{xy} = 1\\
         &\quad \eta_{xy} \ge 0 \quad \forall x, y \in \POk \times \POk
    \end{align*}
% \end{minipage}
\caption{Linear Program for computing the upper quantile component of variability}
\label{LP:2}

\end{figure}

\begin{algorithm}[!h]
\caption{\variabilityupperquantile}
\label{alg:variability-upper-quantile}
\begin{algorithmic}[1]
\STATE \textbf{Input:} Marginal distributions $\eta_a, \eta_b$ and feasible median value $r \in \{-(k-1), \cdots, (k-1) \}$.
\STATE \textbf{Output:} $\nu_u(r, \eta_a, \eta_b)$.
\STATE $\forall x \in \POk, y \in \POk$, set $\eta_{xy} \leftarrow 0$
\FOR{$y \in \{k-1, k-2, \cdots, 0\}$}
\FOR{$x \in \{y + r, y+ r-1, \cdots, 0\}$}
\STATE Identify all the marginal constraints in LP~\ref{LP:2} containing $\eta_{xy}$ and then slowly increase the value of $\eta_{xy}$ until one of them becomes tight. 
% \IF{$\sum_{z \in \POk}\eta_{xz} = \eta_a(x)$}
% \STATE For all $z \in \POk$, freeze $\eta_{xz} \leftarrow \max\{\eta_{xz}, 0\}$
% \ELSIF{$\sum_{z \in \POk}\eta_{zy} = \eta_b(y)$}
% \STATE For all $z \in \POk$, freeze $\eta_{zy} \leftarrow \max\{\eta_{zy}, 0\}$
% \ENDIF
\STATE We mark the corresponding column or row with the tight constraint as \emph{frozen} and do not update any values in them.
\ENDFOR
\ENDFOR
\STATE \textbf{Return} $\nu_l(r, \eta_a, \eta_b) \leftarrow \sum_{x, y \in \POk \times \POk} \mathbf{1} \{ x - y \le r \} \eta_{xy}$.
\end{algorithmic}
\end{algorithm}

\begin{lemma}\label{lem:variability-upper-quantile}
Given a feasible median value $r \in \{-(k-1), \cdots, (k-1) \}$ and marginals $\eta_a, \eta_b$, Algorithm~\ref{alg:variability-upper-quantile} computes the value $\nu_u(r, \eta_a, \eta_b)$ exactly and runs in time $O(k)$. 
\end{lemma}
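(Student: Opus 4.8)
The plan is to reduce \Cref{lem:variability-upper-quantile} to the case already handled in \Cref{lem:variability-lower-quantile}. By \Cref{defn:variability}, $\nu_u(r,\eta_a,\eta_b)$ is the optimum of \autoref{LP:2}, i.e. the least mass that any joint law in $\mathcal{J}(\eta_a,\eta_b)$ is forced to place on $\{(x,y):x-y\le r\}$. Since the total mass is $1$, this equals $1-\max_{\eta\in\mathcal{J}(\eta_a,\eta_b)}\sum_{x,y}\mathbf{1}\{x-y>r\}\eta_{xy}$, and transposing coordinates ($\eta_{xy}\mapsto\eta_{yx}$, which swaps the two marginals and turns $\{x-y>r\}$ into $\{x-y<-r\}$) identifies this maximum with $\nu_\ell(-r,\eta_b,\eta_a)$. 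Hence $\nu_u(r,\eta_a,\eta_b)=1-\nu_\ell(-r,\eta_b,\eta_a)$, and since $-r\in\{-(k-1),\dots,k-1\}$ whenever $r$ is, \Cref{lem:variability-lower-quantile} already supplies an exact $O(k)$-time evaluation; \Cref{alg:variability-upper-quantile} is precisely the instantiation of that greedy on the complementary staircase region $\{(x,y):x\ge y+r+1\}$, with $\nu_u$ recovered as $1$ minus the resulting mass.

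To keep \Cref{alg:variability-upper-quantile} self-contained, the route I would actually write up re-runs the transportation exchange argument of \Cref{lem:variability-lower-quantile} directly. Let $\eta^*$ be the greedy output and $\bar\eta$ an optimizer of \autoref{LP:2}, and suppose their objective values differ. Because the greedy raises a cell only until some marginal constraint of \autoref{LP:2} becomes tight, there is a first cell $(x,y)$ in the algorithm's processing order with $\eta^*_{xy}\neq\bar\eta_{xy}$, say $\eta^*_{xy}>\bar\eta_{xy}$; the tight constraints $\sum_{x'}\eta^*_{x'y}=\eta_b(y)$ and $\sum_{y'}\eta^*_{xy'}=\eta_a(x)$ then produce indices $u$ with $\eta^*_{uy}<\bar\eta_{uy}$ and $v$ with $\eta^*_{xv}<\bar\eta_{xv}$. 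Rerouting a common amount $\min\{\eta^*_{xy}-\bar\eta_{xy},\,\bar\eta_{uy},\,\bar\eta_{xv}\}$ around the $4$-cycle $(x,y),(u,y),(x,v),(u,v)$ — add to $(x,y)$ and $(u,v)$, subtract from $(u,y)$ and $(x,v)$ — keeps $\bar\eta$ feasible and nonnegative and strictly decreases $\sum_{x,y}\lvert\eta^*_{xy}-\bar\eta_{xy}\rvert$, so iterating drives $\bar\eta$ to $\eta^*$ and contradicts the assumed gap in objective values.

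The one point that genuinely needs care — the main obstacle — is that each $4$-cycle reroute must leave the objective $\sum_{x,y}\mathbf{1}\{x-y\le r\}\eta_{xy}$ unchanged, i.e. $\mathbf{1}\{x-y\le r\}+\mathbf{1}\{u-v\le r\}=\mathbf{1}\{u-y\le r\}+\mathbf{1}\{x-v\le r\}$ for the four indices produced above. This is exactly where the staircase shape of the admissible region and the greedy's sweep order come in: since $(x,y)$ is the first discrepant cell, every cell processed earlier already matches $\bar\eta$, so the rows and columns frozen before $(x,y)$ is reached constrain where $u$ and $v$ can sit and force the four indicators into an objective-neutral configuration (the two cells of each pair landing on the same side of the threshold). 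The complementation identity of the first paragraph sidesteps this case analysis altogether. Finally, for the running time, each pass of the inner loop fills one cell and freezes one row or one column, and there are only $2k$ rows and columns in all, so at most $2k$ entries are ever modified and the procedure terminates in $O(k)$ time.
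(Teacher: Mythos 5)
Your proof is correct, but it takes a genuinely different primary route from the paper. The paper's entire proof of this lemma is the sentence ``a similar proof as that of Lemma~\ref{lem:variability-lower-quantile}'', i.e.\ it silently re-runs the $4$-cycle exchange argument for the minimization LP. Your complementation identity $\nu_u(r,\eta_a,\eta_b)=1-\nu_\ell(-r,\eta_b,\eta_a)$ (total mass is $1$, $\{x-y>r\}$ transposes to $\{y-x<-r\}$, and transposition swaps the marginals) is a clean, checkable reduction that inherits correctness and the $O(k)$ bound wholesale from the lower-quantile lemma. It also buys something concrete: you correctly isolate the one fragile step of the exchange argument, namely that the reroute around the cycle $(x,y),(u,y),(x,v),(u,v)$ must leave $\sum\mathbf{1}\{x-y\le r\}\eta_{xy}$ unchanged. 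The paper asserts this objective-invariance without justification even in the proof of Lemma~\ref{lem:variability-lower-quantile}, so your observation that the staircase geometry and sweep order must be invoked there (or sidestepped entirely via complementation) identifies a real gap in the paper's argument rather than in yours.

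One caveat: your claim that Algorithm~\ref{alg:variability-upper-quantile} ``is precisely the instantiation of that greedy on the complementary staircase region'' does not match the pseudocode as printed. As written, the inner loop ranges over $x\in\{y+r,y+r-1,\dots,0\}$, i.e.\ over the cells with $x-y\le r$ --- the very region whose mass is to be \emph{minimized} --- and then returns the mass placed there; greedily saturating those cells computes a maximum, not a minimum (e.g.\ for $k=2$, $r=0$, $\eta_a=\eta_b=(1/2,1/2)$ it returns $1$ while $\nu_u=1/2$). The algorithm you describe --- greedily fill $\{x\ge y+r+1\}$ and return one minus the mass placed, equivalently the leftover forced into the region --- is the correct one, and it is exactly what your reduction certifies. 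So your proof validates the intended algorithm; the printed loop bounds in Algorithm~\ref{alg:variability-upper-quantile} appear to be a typo that should be flagged rather than reproduced.
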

\begin{proof}
    A similar proof as that of \cref{lem:variability-lower-quantile}.
\end{proof}

%%%%%%%%%%%%%%%%%%%%%%%%%%%%%%%%%%%%%%%

%%%%%%%%%%%% Conclusion %%%%%%%%%%%%%%%

\section{Conclusion}
In this work, we studied the task of median treatment effect estimation, to capture the heterogeneity in the data. We argued that the task is inestimable, provided new notions of approximations, and showed optimal algorithms with minimum approximation error. We want to highlight that our results make no distributional assumptions, and can be extended for any quantile $q < 0.5$ (in addition to median), by modifying the definition of variability (\cref{defn:variability}) appropriately. For future work, a potential direction is to study the settings where the potential outcomes are continuous.

%%%%%%%%%%%%%%%%%%%%%%%%%%%%%%%%%%%%%%%

\acks{The authors would like to thank the following individuals for many helpful discussions: Anup Rao, Kirankumar Shiragur, Christopher Harshaw, Shiva Kasiviswanathan, and Avi Feller.
The authors also would like to thank the organizers for the invitation to the Causality program at the Simons Institute for the Theory of Computing, which helped facilitate early discussions of this work.}

\bibliographystyle{plainnat}
\bibliography{ref}
\pagebreak

% \appendix
% \input{appendix}

\end{document}